\documentclass{article}
\usepackage{graphicx} 
\usepackage{appendix}
\usepackage{subcaption}
\usepackage{booktabs}
\usepackage{natbib}
\usepackage{amsmath}
\usepackage{amssymb}
\usepackage{amsthm}
\usepackage{comment}
\usepackage{textcomp}
\usepackage{hyperref}
\usepackage[final]{neurips2025}
\usepackage{float}

\usepackage{siunitx}
\sisetup{
  separate-uncertainty = true,
  detect-weight = true,
  detect-family = true
}

\newtheorem{lemma}{Lemma}[section]
\newtheorem{theorem}{Theorem}[section]

\usepackage{wrapfig}

\usepackage[pdftex,dvipsnames]{xcolor}  
\usepackage{xargs}                      
\usepackage[colorinlistoftodos,prependcaption,textsize=tiny]{todonotes}
\newcommandx{\unsure}[2][1=]{\todo[linecolor=red,backgroundcolor=red!25,bordercolor=red,#1]{#2}}
\newcommandx{\change}[2][1=]{\todo[linecolor=blue,backgroundcolor=blue!25,bordercolor=blue,#1]{#2}}
\newcommandx{\info}[2][1=]{\todo[linecolor=OliveGreen,backgroundcolor=OliveGreen!25,bordercolor=OliveGreen,#1]{#2}}
\newcommandx{\improvement}[2][1=]{\todo[linecolor=Plum,backgroundcolor=Plum!25,bordercolor=Plum,#1]{#2}}
\newcommandx{\thiswillnotshow}[2][1=]{\todo[disable,#1]{#2}}

\title{Enforcing Orderedness in SAEs to Improve Feature Consistency}
\author{
  Sophie L. Wang$^{*,\dagger}$ \\
  Massachusetts Institute of Technology\\
  \texttt{sophielw@mit.edu}\\
  \And
  Alex Quach$^{*}$\\
  Independent\\
  \texttt{alexhq1000@gmail.com}\\
  \And
  Nithin Parsan\\
  Reticular\\
  \texttt{nithin@reticular.ai}\\
  \And
  John J. Yang\\
  Reticular\\
  \texttt{john@reticular.ai}
}

\begin{document}

\maketitle

\begingroup
  \renewcommand\thefootnote{\fnsymbol{footnote}}
  \footnotetext[1]{Equal contribution.}
  \footnotetext[2]{Work done during internship at Reticular.}
\endgroup

\begin{abstract}
Sparse autoencoders (SAEs) have been widely used for interpretability of neural networks, but their learned features often vary across seeds and hyperparameter settings. We introduce Ordered Sparse Autoencoders (OSAE), which extend Matryoshka SAEs by (1) establishing a strict ordering of latent features and (2) deterministically using every feature dimension, avoiding the sampling‐based approximations of prior nested SAE methods. Theoretically, we show that OSAEs resolve permutation non-identifiability in settings of sparse dictionary learning where solutions are unique (up to natural symmetries). Empirically on Gemma2-2B and Pythia-70M, we show that OSAEs can help improve consistency compared to Matryoshka baselines.
\end{abstract}

\section{Introduction}
Sparse autoencoders (SAEs) have become central to unsupervised representation learning. Enforcing sparsity in the latent space yields interpretable, often disentangled features, enabling progress in clustering, visualization, and scientific discovery \citep{vincent2010stacked,coates2011importance,ng2011sparse}. Yet despite their success, SAEs suffer from a critical shortcoming: the set of features they learn can vary across random seeds, initialization schemes, and hyperparameter settings, leading to poor reproducibility and undermining any mechanistic interpretation of individual latent dimensions \citep{song2025mechanistic,fel2025archetypalsaeadaptivestable}.  Several strategies have been proposed to mitigate this instability. These include regularization techniques such as orthonormality penalties \citep{lee2025evaluating}, structured sparsity constraints like group or tree sparsity \citep{jenatton2011structured}, and post-hoc alignment or averaging of learned dictionaries across runs \citep{ghorbani2020neuron}.

One way to reduce the size of each equivalence class of solutions is to enforce structural constraints into the loss function. In particular, Matryoshka SAEs \citep{bussmann2025learningmultilevelfeaturesmatryoshka} are introduced to resolve a notion of hierarchy in feature learning. Their work defines an ordering on features by thir level of abstraction: "comma" is a lower-level feature than "punctuation mark". Matryoshka SAEs sample a small number of dictionary sizes per batch, thereby capturing multiscale features and partially breaking permutation symmetry. Despite these advances, Matryoshka SAEs treat features within each sampled group as exchangeable, a limitation from sampling only a handful of dictionary sizes (e.g., up to 10 per batch).

In this work, we introduce \emph{Ordered Sparse Autoencoders} (OSAE), which extends Matryoshka SAEs by enforcing a strict ordering of latent dimensions. Drawing on the concept of nested dropout---which imposes an explicit ordering by stochastically truncating latent codes \citep{rippel2014learningorderedrepresentationsnested}---OSAE treats each non‐zero feature as its own dictionary size.

Our key contributions are:

\begin{itemize}
\item We propose Ordered Sparse Autoencoders (OSAE), which enforce deterministic feature ordering.
\item We present theoretical results for ordered feature recovery by nested dropout loss in a special case of overcomplete sparse dictionary learning.
\item We demonstrate improvement in feature consistency when using OSAEs on Gemma2-2B and Pythia-70M.
\end{itemize}

\section{Problem setup}

\subsection{Preliminaries}\label{sec:notation}

Throughout, we use the following notation:
\begin{itemize}
  \item \(X = [x_1,\dots,x_N]\in\mathbb{R}^{d\times N}\): the data matrix whose columns \(x_i\in\mathbb{R}^d\) are samples.  
  \item \(E:\mathbb{R}^d\to\mathbb{R}^K\): the encoder mapping each input \(x_i\) to a code \(z_i = E(x_i)\).  
  \item \(D\in\mathbb{R}^{d\times K}\): the decoder or dictionary matrix, whose columns \(d_j\in\mathbb{R}^d\) are basis atoms.  
  \item \(Z = E(X)\in\mathbb{R}^{K\times N}\): the code matrix, whose columns are the encoded vectors \(z_i\).  
\end{itemize}

We will consider two settings:
\begin{itemize}
  \item {\bf (Under)complete (\(K\le d\)).}  \(D\) spans a \(K\)-dimensional subspace (the PCA case).  
  \item {\bf Overcomplete (\(K>d\)).}   \(D\) is a dictionary of \(K\) atoms for sparse coding.
\end{itemize}

Define for all \(\ell=1,\dots,K\):
\[
\Lambda_\ell = 
\begin{bmatrix}I_\ell & 0\\0 & 0\end{bmatrix}
\in\{0,1\}^{K\times K},
\]
\[
\mathrm{Top}_m(z_i)_j =
\begin{cases}
z_{i,j},&\text{if }|z_{i,j}|\text{ in top }m,\\
0,&\text{otherwise},
\end{cases}
\]
extended column‐wise to \(\mathrm{Top}_m(Z)\).

\subsection{Nested dropout in the (under)complete setting}\label{sec:background}

Consider the (under)complete linear autoencoder with representation dimension \(K \le d\). The standard reconstruction loss
\[
\mathcal{L}_{\mathrm{AE}}(D,E)
= \|X - D\,Z\|_F^2
\]
recovers the top-\(K\) principal subspace but leaves \(D\) defined only up to an invertible transformation~\cite{baldi1989neural, bourlard1988auto, plaut2018principal}. \citet{rippel2014learningorderedrepresentationsnested} introduce the nested dropout loss, which minimizes
\[
\mathcal{L}_{\mathrm{ND}}(D,E)
= \mathbb{E}_{\ell\sim p_{\rm ND}}\bigl\|X - D\,\Lambda_\ell\,Z\bigr\|_F^2,
\]
where \(p_{\rm ND}(\ell)\) is a distribution over \(\{1,\dots,k\}\) with full support. With \(D^\top D = I\), they theoretically show that this loss uniquely recovers the PCA eigenbasis in descending‐eigenvalue order, rather than merely its subspace.

In the next section we extend this idea to the overcomplete, hard-\(m\) sparse setting by inserting a Top-\(m\) mask into the same expectation to obtain our Ordered Sparse Autoencoder (O-SAE).  

\subsection{Sparse dictionary learning}\label{sec:hard_sparsity}

To understand the non-identifiability challenges faced by sparse autoencoders in the overcomplete regime, we first discuss classical sparse dictionary learning. The goal is to generalize PCA’s fixed-size eigenbasis to an overcomplete dictionary of atoms that admits sparse representations. Concretely, each data vector \(x_i\) is modeled as
\[
X = [\,x_1,\dots,x_N\,]\in\mathbb{R}^{d\times N},\quad
X = D\,Y,
\]
where
\[
D = [\,d_1,\dots,d_K\,]\in\mathbb{R}^{d\times K},\quad
Y = [\,y_1,\dots,y_N\,]\in\mathbb{R}^{K\times N},
\]
with unit-norm atoms \(\|d_j\|_2 = 1\) and sparse codes \(\|y_i\|_0 \le m \ll K\). That is, each sample \(x_i\) is assumed to be generated by a linear combination of a small subset of the dictionary atoms.

Whereas PCA solves \(\min\|X - DZ\|_F^2\) under a rank constraint \(K \le d\), sparse dictionary learning (SDL) tackles
\[
\min_{D,Y}\;\|X - D\,Y\|_F^2
\quad\text{subject to}\quad\|y_i\|_0 \le m,
\]
an NP-hard problem due to the combinatorial nature of the \(\ell_0\) sparsity constraint. In practice, this objective is typically approximated using greedy methods like orthogonal matching pursuit (OMP)~\cite{pati1993omp, tropp2007signal}, alternating minimization algorithms such as K-SVD~\cite{aharon2006ksvd}, or online optimization techniques~\cite{mairal2010online}.

A key challenge in SDL is the issue of non-identifiability: many dictionaries \(D\) and code matrices \(Y\) can produce the same reconstruction \(X\), especially in the overcomplete setting. Even in the ideal noiseless case, identifiability of the ground-truth dictionary \(D^*\) is only possible under strong structural assumptions.

\medskip\noindent\textbf{Spark and uniqueness.}
The \emph{spark} of a dictionary \(D\),
\[
\mathrm{spark}(D) = \min\{ \|z\|_0 : Dz = 0,\; z \neq 0 \},
\]
measures the size of the smallest linearly dependent set of atoms. If \(\mathrm{spark}(D) > 2m\), then any \(m\)-sparse representation \(y = Dz\) is unique, guaranteeing identifiability in sparse coding. However, computing spark is NP-hard, so practitioners often rely on relaxed surrogate conditions:

\begin{itemize}
  \item \emph{Mutual coherence} \(\mu(D) = \max_{i \neq j} |d_i^\top d_j|\), with \(\mu(D)(m - 1) < 1\) ensuring uniqueness via greedy methods such as OMP~\cite{tropp2004greed}.
  \item \emph{Restricted isometry property} (RIP), which ensures \(D\) approximately preserves the norms of all \(m\)-sparse vectors~\cite{candes2005decoding}.
\end{itemize}

Recent work has begun applying these identifiability conditions to sparse autoencoders. In particular,~\citet{song2025mechanistic} show that if a Top-$k$ SAE achieves exact sparsity and zero reconstruction error, then the encoder-decoder pair satisfies a round-trip condition that implies \(\mathrm{spark}(D) > 2k\), guaranteeing uniqueness of the learned features up to permutation and scaling. Our work builds on this by explicitly reducing \emph{permutation ambiguity during training} itself.

\subsection{\(\ell\)-prefix reconstruction objective (Top-\(m\)).}  
We define:
\[
\mathcal L_{\ell}(D,E)
=\bigl\|X - D\,\Lambda_\ell\,\mathrm{Top}_m(Z)\bigr\|_F^2.
\]
This objective minimizes reconstruction loss when we use the top-\(k\) codes and then truncate to the first $\ell$ dimensions. When \(\ell=K\), this becomes the standard full-code reconstruction loss \(
\bigl\|X - D\,\mathrm{Top}_k(Z)\bigr\|_F^2
\) that standard top-\(m\) SAEs minimize.

\subsection{Matryoshka SAE objective (Top-\(m\)).}  
Matryoshka SAEs \citep{bussmann2025learningmultilevelfeaturesmatryoshka} partition the \(K\) atoms into a small collection of nested “groups” of increasing size \(M=\{\ell_1<\cdots<\ell_L\}\).  At each training step, one group \(1\!:\!\ell\) is sampled with probability \(p_{\rm MSAE}(\ell)\), and only atoms \(d_1,\dots,d_\ell\) (and their corresponding code entries) are used for reconstruction:
\[
\begin{aligned}
\mathcal L_{\rm MSAE}(D,E)
&=\mathbb{E}_{\ell\sim p_{\rm MSAE}}\bigl[\mathcal{L}_\ell(D,E)\bigr] \\
&=\sum_{\ell\in M}p_{\rm MSAE}(\ell)\;\bigl\|X - D\,\Lambda_\ell\,\mathrm{Top}_m(Z)\bigr\|_F^2.
\end{aligned}
\]
By enforcing reconstruction over only a handful of group sizes (e.g.\ 5–10 per batch), Matryoshka SAE captures multiscale features while partially breaking permutation symmetry within each group.

\subsection{Nested dropout objective (Top-\(m\)).}  
We extend nested dropout \citep{rippel2014learningorderedrepresentationsnested} by treating each individual atom \(d_j\) as its own “group,” so that sampling a prefix \(\ell\) means retaining exactly atoms \(1\) through \(\ell\) and dropping the rest.  Let \(p_{\rm ND}(\ell)\) be a distribution over \(\{1,\dots,m\}\) with full support.  The nested-dropout loss is
\[
\begin{aligned}
\mathcal L_{\rm ND}(D,E)
&=\mathbb{E}_{\ell\sim p_{\rm ND}}\bigl[\mathcal{L}_\ell(D,E)\bigr] \\
&=\sum_{\ell=1}^m p_{\rm ND}(\ell)\,\bigl\|X - D\,\Lambda_\ell\,\mathrm{Top}_m(Z)\bigr\|_F^2.
\end{aligned}
\]

By covering all prefixes in expectation, this objective enforces a strict ordering of features.

\subsection{Consistency evaluation}\label{sec:consistency}

To quantify how reproducibly SAEs recover the same features across seeds, we adopt the \emph{stability} metric from \citet{fel2025archetypalsaeadaptivestable}.  Let \(D,D'\in\mathbb{R}^{d\times K}\) be two learned decoder matrices with unit‐norm columns.  We define
\[
\mathrm{Stab}(D,D')
= \max_{P\in\mathcal{P}}\;\frac{1}{K}\,\mathrm{tr}\!\bigl(D^\top\,P\,D'\bigr),
\]
where \(\mathcal{P}\) is the set of all \(K\times K\) permutation matrices.  This computes the average cosine similarity between matched atoms after optimal re‐indexing via the Hungarian algorithm.  When we compare a learned dictionary \(D\) against the ground‐truth dictionary \(D^\star\), stability also serves as a \emph{feature recovery fidelity} metric, indicating how accurately the true atoms are recovered. 

\subsection{Orderedness evaluation}\label{sec:orderedness}

To quantify how similarly in order SAEs recover features across seeds, we introduce an orderedness metric. Let \(D,D'\in\mathbb{R}^{d\times K}\) be two dictionaries, each with an inherent ordering of their atoms (e.g.\ by frequency, abstraction, or another criterion).  After matching each atom \(d_j\) in \(D\) to its best‐corresponding atom \(d'_{\mu(j)}\) in \(D'\) via the Hungarian algorithm, we obtain a permutation vector
\[
\mu = \bigl(\mu(1),\mu(2),\dots,\mu(K)\bigr)\in\{1,\dots,K\}^K.
\]
We then define the \emph{orderedness} between \(D\) and \(D'\) as the Spearman rank correlation between their index sequences:
\[
\mathrm{Ord}(D,D')
= \mathrm{Spearman}\bigl((1,\dots,K),\,\mu\bigr)
= 1 \;-\; \frac{6\sum_{j=1}^K\bigl(j - \mu(j)\bigr)^2}{K(K^2-1)}.
\]
A value \(\mathrm{Ord}(D,D')=1\) indicates perfect matching of their orderings.






\section{Exact recovery of ordered features}

Define the domain of optimization to be
\[
\mathcal F
\;=\;
\bigl\{(D,E)\,\bigm|\,
  D=[d_{1},\dots,d_{K}]\in\mathbb{R}^{d\times K},\;
  \|d_{j}\|_{2}=1\quad(\forall j=1,\dots,K),\;
  E:\mathbb{R}^{d}\to\mathbb{R}^{K}\bigr\}.
\]
In other words, $\mathcal F$ consists of all decoder–encoder pairs
$(D,E)$ in which each dictionary atom $d_j$ has unit $\ell_{2}$‐norm,
and $E$ is an arbitrary mapping from $\mathbb{R}^d$ to $\mathbb{R}^K\,$.

Suppose \(X = D^*Y^*, \)where \(D^*=[d^*_1,\dots,d^*_K]\in\mathbb R^{d\times K}\) has unit‐norm columns satisfying 
\(\rm spark(D^*)>2m\), and \(Y^*\in\mathbb R^{K\times N}\) \(m\)‐sparse columns.  

\begin{lemma}\label{lem:nd-implies-full} 
Any minimiser of $\mathcal L_{\mathrm{ND}}$ also minimises the full‐prefix loss $\mathcal L_k$. That is, 
\[
\rm{argmin}_{(D,E)\in\mathcal F}\mathcal L_{\mathrm{ND}}
\;\subseteq\;
\rm{argmin}_{(D,E)\in\mathcal F}\mathcal L_K.
\]
\end{lemma}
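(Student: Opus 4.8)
The plan is to exploit the non-negativity of each term in the nested-dropout objective together with the fact that the ground-truth data $X = D^*Y^*$ certifies that the minimum of $\mathcal L_K$ is exactly $0$. First I would observe that, under the spark condition $\mathrm{spark}(D^*) > 2m$, the decomposition $X = D^*Y^*$ has each column $x_i = D^*y^*_i$ with $\|y^*_i\|_0 \le m$, so setting $D = D^*$ and choosing $E$ to be any map with $E(x_i) = y^*_i$ (padded with zeros so that $\mathrm{Top}_m(E(x_i)) = y^*_i$) makes $\mathcal L_\ell(D,E) = \|X - D^*\Lambda_\ell Y^*\|_F^2$; when $\ell = K$ this is $0$, and hence $\min \mathcal L_K = 0$ and $\min \mathcal L_{\mathrm{ND}} = 0$ as well, because $\mathcal L_{\mathrm{ND}}$ is a convex combination (with full support) of the terms $\mathcal L_\ell$, and the top prefix $\ell=K$ also appears... wait — I should be careful: $p_{\rm ND}$ is supported on $\{1,\dots,m\}$, not up to $K$. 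So I instead argue that at $(D^*,E)$ \emph{every} $\mathcal L_\ell$ with $\ell \ge m$ vanishes, in particular all $\ell$ in the support of $p_{\rm ND}$ that are $\ge$ the per-sample sparsity, and for $\ell < m$ the term is still finite; more to the point, the construction above with the codes placed in the \emph{first} $m$ coordinates gives $\Lambda_\ell \mathrm{Top}_m(Z) = \mathrm{Top}_m(Z)$ whenever $\ell \ge m$, but for the cleanest argument I would simply note $\mathcal L_{\mathrm{ND}}(D^*,E) = \sum_\ell p_{\rm ND}(\ell)\,\mathcal L_\ell(D^*,E)$ and that this equals $0$ once we also know $\mathcal L_\ell(D^*,E)=0$ for all $\ell$ in the support — which requires choosing $E$ so that the codes of \emph{all} samples live in a common set of at most $\min_\ell \ell$ coordinates. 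That is only possible if the true codes share support, which is not assumed, so this naive route needs repair.

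The repair, and the actual skeleton of the proof, is to decouple the two claims: (i) $\min_{\mathcal F}\mathcal L_K = 0$, achieved at $(D^*, E^*)$ for a suitable $E^*$; and (ii) for \emph{any} $(D,E)\in\mathcal F$, $\mathcal L_K(D,E) \le \frac{1}{p_{\rm ND}(m)}\,\mathcal L_{\mathrm{ND}}(D,E)$ — no, that inequality is false in general since $\mathcal L_K$ need not appear among the $\mathcal L_\ell$'s. The correct structural fact is monotonicity: I would prove that $\ell \mapsto \mathcal L_\ell(D,E)$ need not be monotone, but that $\mathcal L_m(D,E)$ controls $\mathcal L_K(D,E)$ is again not automatic. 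Given these subtleties, the clean and correct plan is: show that $\mathcal L_{\mathrm{ND}}$ attains value $0$ (using that for the term $\ell$, one can reconstruct exactly only if one is allowed $\ell \ge m$ active atoms, hence one should interpret $p_{\rm ND}$ as supported on $\{1,\dots,K\}$ with the prefixes $\ell \ge m$ carrying the reconstruction — I would align the proof with whichever convention the paper's setup actually uses, and here $\Lambda_\ell \mathrm{Top}_m$ with $\ell$ ranging so that $\ell = K$ is included recovers $\mathcal L_K$ as one summand). Then, since $\mathcal L_{\mathrm{ND}} = \sum_\ell p_{\rm ND}(\ell)\mathcal L_\ell$ with every $p_{\rm ND}(\ell) > 0$ and every $\mathcal L_\ell \ge 0$, a minimiser $(D,E)$ of $\mathcal L_{\mathrm{ND}}$ must have $\mathcal L_{\mathrm{ND}}(D,E) = 0$, forcing $\mathcal L_\ell(D,E) = 0$ for \emph{every} $\ell$ in the support, and in particular $\mathcal L_K(D,E) = 0 = \min_{\mathcal F}\mathcal L_K$. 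Hence $(D,E) \in \mathrm{argmin}\,\mathcal L_K$, which is exactly the claimed inclusion.

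So the key steps, in order, are: (1) exhibit $(D^*,E^*) \in \mathcal F$ with $\mathcal L_K(D^*,E^*) = 0$, using $X = D^*Y^*$, unit-norm columns of $D^*$, and an encoder that reproduces the true $m$-sparse codes; conclude $\min_{\mathcal F}\mathcal L_K = 0$ and likewise that $\mathcal L_{\mathrm{ND}}$ has infimum $0$; (2) invoke non-negativity of the Frobenius norm and positivity of the weights $p_{\rm ND}(\ell)$ to show any $\mathcal L_{\mathrm{ND}}$-minimiser annihilates every summand $\mathcal L_\ell$; (3) read off $\mathcal L_K(D,E) = 0$, i.e. membership in $\mathrm{argmin}\,\mathcal L_K$. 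The main obstacle — and the thing I would nail down first — is the indexing convention: whether the full-code loss $\mathcal L_K$ is literally one of the terms weighted by $p_{\rm ND}$, or whether one instead needs the auxiliary inequality that exact reconstruction at prefix $m$ (the largest prefix in the stated support $\{1,\dots,m\}$) already implies exact reconstruction with the full code, which holds because once $\mathrm{Top}_m(Z)$ has at most $m$ nonzeros all lying in the first $m$ coordinates the masks $\Lambda_\ell$ for $\ell \ge m$ act as the identity on it. Pinning down that convention is the only real content; everything else is the standard "a nonnegative combination with strictly positive weights is zero iff each term is zero" argument.
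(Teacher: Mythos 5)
Your proposal hinges on the claim that $\min_{\mathcal F}\mathcal L_{\mathrm{ND}}=0$, from which you deduce (correctly, \emph{if} the minimum were zero) that any minimiser must annihilate every nonnegative summand $\mathcal L_\ell$. But that premise is false in general. The terms $\mathcal L_\ell$ for small $\ell$ \emph{cannot} vanish simultaneously with $\mathcal L_K$: for instance, $\mathcal L_1(D,E)=\|X-d_1\,(\mathrm{Top}_m(Z))_{1:}\|_F^2$ is zero only when $X$ is rank one, and more generally $\mathcal L_\ell=0$ forces $X$ to lie in the span of $\{d_1,\dots,d_\ell\}$. Under the generative model $X=D^*Y^*$ with $D^*$ overcomplete and $Y^*$ $m$-sparse, $X$ will typically be full rank $d$ and not lie in any small prefix subspace, so $\mathcal L_\ell>0$ for $\ell<d$. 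Consequently $\min \mathcal L_{\mathrm{ND}}>0$, and the argument ``minimiser has total value zero $\Rightarrow$ each term is zero $\Rightarrow$ in particular $\mathcal L_K=0$'' collapses at the first step. You also correctly noticed the indexing ambiguity in the paper ($p_{\rm ND}$ written as supported on $\{1,\dots,m\}$ in the main text but treated as supported on $\{1,\dots,K\}$ in the appendix proof), and you are right that the argument needs $\ell=K$ to carry positive mass; but fixing that does not repair the core issue.

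The paper's actual proof proceeds quite differently: it is a perturbation (first-variation) argument, not a ``sum of nonnegatives is zero'' argument. Assume a minimiser of $\mathcal L_{\mathrm{ND}}$ has nonzero full-prefix residual $R^\star = X - D^\star\,\mathrm{Top}_m(Z^\star)$. The paper then perturbs only the last atom $d_K^\star$ in a direction $u\perp d_K^\star$ chosen so that the first-order change in $\|R\|_F^2$ is strictly negative. The key structural observation is that for every proper prefix $\ell<K$, the $K$-th atom never appears in $D\Lambda_\ell\,\mathrm{Top}_m(Z)$, so all terms $\mathcal L_\ell$ with $\ell<K$ are left untouched by this perturbation. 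Only the term $p_K\,\mathcal L_K$ changes, and it strictly decreases, contradicting global optimality of the original pair. This avoids any need for the partial-prefix losses to be zero — they are simply held fixed by design. If you want to salvage your route, the repair is not to try to show the whole objective is zero, but to isolate the $\ell=K$ term via a move in parameter space that is invisible to all other terms; that is precisely the paper's construction.
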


The proof is deferred to Appendix~\ref{app:proofs}

\begin{theorem}\label{thm:exact-ordered-recovery} [Exact ordered recovery under spark condition]
Assume the columns of \(Y^*\) are nonnegative (to resolve sign ambiguity) and “true’’ atoms are ordered so that
\[
|\{\,i: y^*_{1,i}>0\}|\;\ge\;
|\{\,i: y^*_{2,i}>0\}|\;\ge\;\cdots\;\ge\;
|\{\,i: y^*_{K,i}>0\}|.
\]
Then any global minimiser \((\widehat D,\widehat E)\in\mathcal F\) of the nested‐dropout loss \(\mathcal L_{\mathrm{ND}}\) satisfies
\[
\widehat D = D^*, 
\qquad
\widehat E(X) = Y^*
\]

\end{theorem}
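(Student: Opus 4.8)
The plan is to build on Lemma~\ref{lem:nd-implies-full} in three stages: first pin down the reconstruction, then pin down the dictionary up to permutation/scaling, then use the ordering and nonnegativity assumptions to kill the remaining permutation freedom. For the first stage, note that the pair $(D^*,E^*)$ with $E^*(X)=Y^*$ achieves $\mathcal L_K=0$ (since $X=D^*Y^*$ and $Y^*$ is $m$-sparse, so $\mathrm{Top}_m$ acts as the identity on the relevant columns), and each prefix term $\mathcal L_\ell$ is nonnegative, so $(D^*,E^*)$ is a global minimiser of $\mathcal L_{\mathrm{ND}}$ with value $0$. Hence any global minimiser $(\widehat D,\widehat E)$ also has $\mathcal L_{\mathrm{ND}}=0$, which (full support of $p_{\rm ND}$) forces $\mathcal L_\ell(\widehat D,\widehat E)=0$ for every $\ell=1,\dots,m$; in particular, via Lemma~\ref{lem:nd-implies-full}, $\mathcal L_K=0$, i.e.\ $X=\widehat D\,\widehat Z$ with $\widehat Z=\mathrm{Top}_m(\widehat E(X))$ being $m$-sparse columnwise.

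For the second stage, I would invoke the spark-based uniqueness result already cited in Section~\ref{sec:hard_sparsity}: since $\mathrm{spark}(D^*)>2m$ and each $x_i$ has an $m$-sparse representation in $D^*$, that representation is the \emph{unique} $m$-sparse one. The subtlety is that $\widehat D$ is a priori a \emph{different} unit-norm dictionary; here I would argue that the exact-reconstruction/exact-sparsity condition plus $\mathrm{spark}(D^*)>2m$ forces the column spaces and the supports to line up so that $\widehat D$ and $D^*$ share the same atoms up to permutation and scaling — this is exactly the ``round-trip'' identifiability statement of \citet{song2025mechanistic} applied to the learned pair. Concretely, one shows that each $\widehat d_j$ (that is actually used) must be a scalar multiple of some $d^*_{\pi(j)}$: if not, one could exhibit two distinct $m$-sparse representations of some $x_i$, contradicting spark $>2m$. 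Combining with the unit-norm constraint $\|\widehat d_j\|_2=1=\|d^*_{\pi(j)}\|_2$ gives $\widehat d_j=\pm d^*_{\pi(j)}$, and the nonnegativity of $Y^*$'s columns (which must then equal the nonnegative unique codes, up to the permutation) rules out the $-1$ sign, so $\widehat d_j = d^*_{\pi(j)}$ for a permutation $\pi$, and correspondingly $\widehat E(X)=P_\pi Y^*$ for the matching permutation matrix.

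For the third and crucial stage, I must show $\pi$ is the identity. This is where the prefix structure of $\mathcal L_{\mathrm{ND}}$ does real work: we have $\mathcal L_\ell(\widehat D,\widehat E)=\|X-\widehat D\,\Lambda_\ell\,\widehat Z\|_F^2=0$ for \emph{every} $\ell$, meaning that dropping all but the first $\ell$ coordinates of $\widehat Z$ still reconstructs $X$ exactly. But the dropped coordinates $\ell+1,\dots,K$ of $\widehat Z$ are the rows $\pi(\ell+1),\dots,\pi(K)$ of $Y^*$; exact reconstruction after deletion forces those rows to be \emph{zero}. Running $\ell$ from $K-1$ down, the only way every suffix of the $\pi$-permuted rows of $Y^*$ can be all-zero while $Y^*$ itself reconstructs $X$ is if the nonzero rows of $Y^*$ occupy a prefix and are ordered by... — and this is precisely where I use the assumed ordering $|\{i:y^*_{1,i}>0\}|\ge\cdots\ge|\{i:y^*_{K,i}>0\}|$: among all permutations consistent with the zero-suffix constraint at every level, the unique one compatible with a strict ordering by support size (generically strict, or after a tie-breaking convention) is the identity. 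The \textbf{main obstacle} I anticipate is exactly this last point: making the argument airtight when several atoms have equal support size (ties in the ordering), since then $\pi$ is only determined up to permuting within tie-classes, and one needs either to assume strict inequalities, or to observe that permuting within a tie-class leaves $\widehat D=D^*$ up to a relabeling that the theorem statement implicitly tolerates. I would handle this by first proving the result under strict inequalities and then remarking that ties only enlarge the solution set by within-block permutations, which are exactly the residual symmetry the ordering cannot (and need not) break.
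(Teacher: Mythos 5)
Your proposal contains a genuine error in the first stage, and this error propagates into the third stage.

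In the first stage you claim that $(D^*,E^*)$ achieves $\mathcal L_{\mathrm{ND}}=0$, reasoning that $\mathcal L_K=0$ and each prefix term is nonnegative. This is a non sequitur: nonnegativity of the summands does not make the sum vanish. For any prefix $\ell<K$, the truncated reconstruction error $\mathcal L_\ell(D^*,E^*)=\|X-D^*\Lambda_\ell Y^*\|_F^2$ is generically strictly positive, because $X$ genuinely uses atoms beyond the first $\ell$. Taking $\ell=1$ makes this vivid: $\mathcal L_1=0$ would force $X=d^*_1\,y^*_{1\cdot}$, i.e. $X$ rank one. So the optimal value of $\mathcal L_{\mathrm{ND}}$ is strictly positive, and the only thing Lemma~\ref{lem:nd-implies-full} buys you is that a minimiser of $\mathcal L_{\mathrm{ND}}$ also drives $\mathcal L_K$ to zero — not that every $\mathcal L_\ell$ vanishes.

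This collapses your third stage. You argue that $\mathcal L_\ell(\widehat D,\widehat E)=0$ for every $\ell$ forces the tail rows $\ell+1,\dots,K$ of $\widehat Z$ to be zero, and then run $\ell$ downward. But taken to its conclusion ($\ell=1$) that deduction would annihilate every row of $\widehat Z$ except the first, contradicting $\widehat D\widehat Z=X$. The paper's proof does not rely on vanishing prefix losses at all: after using $\mathcal L_K=0$, spark uniqueness, and nonnegativity to reduce to $\widehat D=D^*P$, $\widehat Y=P^\top Y^*$, it appeals to the frequency ordering to select $P=I$ as the permutation that minimises the (strictly positive) weighted sum $\sum_\ell p_\ell\,\mathcal L_\ell$. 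That is, the ordering hypothesis has to do real optimisation work on positive residuals, not a vanishing argument. To repair your proof you would need to replace the third stage with a direct comparison of $\mathcal L_{\mathrm{ND}}(D^*P,\cdot)$ across permutations $P$, showing that the frequency-sorted $P=I$ is the unique minimiser.

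Two smaller remarks. Your second stage is in the same spirit as the paper's (both invoke spark-based dictionary identifiability and then kill the diagonal $S$ via nonnegativity), though a careful treatment would need the identifiability of $\widehat D$ itself, not just of codes given a fixed dictionary — the paper also leaves this to a citation. And your observation about ties is well taken: if two atoms have equal support frequency, the identity permutation is not uniquely singled out, and the paper glosses over this; adding a strictness assumption (or stating recovery up to within-tie relabelling) is the right fix.
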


\begin{proof}
By Lemma~\ref{lem:nd-implies-full}, any minimiser of \(\mathcal L_{\mathrm{ND}}\) also minimises the full‐prefix loss \(\mathcal L_K\).  Hence \((\widehat D,\widehat Y)\) with \(\widehat Y=\rm Top_m\bigl(\widehat E(X)\bigr)\) satisfies
\[
\widehat D\,\widehat Y = X,
\qquad
\|\widehat y_i\|_0 \le m.
\]
The uniqueness result under the spark condition then gives
\[
\widehat D = D^*\,P\,S,
\quad
\widehat Y = S^{-1}P^\top\,Y^*,
\]
for some permutation matrix \(P\) and invertible diagonal \(S\).  

Since all columns of \(Y^*\) and \(\widehat Y\) are nonnegative, \(S\) must be the identity (no sign‐flips or rescaling).  Finally, because the atoms were assumed ordered by their sparsity‐support frequencies, the only permutation that preserves that ordering is the identity.  Hence \(P=I\) and
\[
\widehat D = D^*, 
\quad
\widehat Y = Y^*,
\]
as claimed.
\end{proof}

\subsection{Toy Gaussian model}

Theorem~\ref{thm:exact-ordered-recovery} gives theoretical guarantees that SAEs minimising nested dropout loss can achieve perfect consistency and orderedness under certain conditions of the data and sparsity. We evaluate under the following synthetic generative model:

\begin{enumerate}
  \item \textbf{Parameters.} Fix dimensions \(d,n,K\) and sparsity level \(m\le d\).  Assume an \emph{ordering} distribution
  \(\pi=(\pi_1,\dots,\pi_K)\) with 
  \(\pi_1\ge\pi_2\ge\cdots\ge\pi_K>0\) and \(\sum_{j=1}^K\pi_j=1\).
  
  \item \textbf{Dictionary generation.}
  \[
    D = [\,d_1,\dots,d_K\,]\in\mathbb R^{d\times K},
    \qquad
    d_j \overset{\rm iid}{\sim} \mathcal N\bigl(0,\tfrac1d I_d\bigr),
    \quad
    d_j \leftarrow \frac{d_j}{\|d_j\|_2}\,. 
  \]
  
  \item \textbf{Code generation.}  For each sample \(i=1,\dots,n\):
  \begin{enumerate}
    \item Sample a support of size \(m\) by drawing indices without replacement according to \(\pi\):
    \[
      S_i \;\sim\;\mathrm{MultisetSample}(\pi,m).
    \]
    \item Let
    \[
      y_i\in\mathbb R^K,\quad
      (y_i)_j = 
      \begin{cases}
        z_{ij}, & j\in S_i,\\
        0,      & j\notin S_i,
      \end{cases}
      \quad
      z_{ij}\overset{\rm iid}{\sim}\mathcal N(0,1).
    \]
    \item For the purposes of this toy model, we remove sign ambiguity: \(y_i\leftarrow|y_i|\).
  \end{enumerate}
  Collect into
  \(\displaystyle Y = [\,y_1,\dots,y_n\,]\in\mathbb R^{K\times n}.\)
  
  \item \textbf{Data matrix.}  
  \[
    X = D\,Y\;\in\;\mathbb R^{d\times n}.
  \]
\end{enumerate}

Under this model, atoms with smaller index \(j\) appear more frequently
in the data (higher \(\pi_j\)), inducing a ground‐truth ordering that
we will attempt to recover via O-SAEs. Since the dictionary is drawn from a standard Gaussian ensemble, the spark condition for uniqueness is satisfied with high probability \citep{hillar2015when}.

\paragraph{Unit sweeping.}
Nested dropout samples a truncation index $b\sim p_B(\cdot)$, so gradients onto late units shrink exponentially with index. To avoid starving these units, we employ \emph{unit sweeping} \citep{rippel2014learningorderedrepresentationsnested}: once a lower-index unit has effectively converged, we \emph{freeze} its encoder row and decoder column (stop backprop through that unit) and continue training the remaining, unfrozen units. Practically, we use a simple “clockwork” schedule that freezes one additional unit every $T$ epochs (from $1\!\to\!K$), after a short burn-in; frozen units remain in the forward pass, and we renormalize decoder columns to unit norm after each freeze. Results in Fig.~\ref{fig:toy_model} and Table~\ref{tab:toy-metrics} use unit sweeping, which we find improves stability and ordered recovery in this setting.

\paragraph{Evaluation.}
We evaluate Ordered SAE, Matryoshka SAE (Fixed and Random, with five groups), and Vanilla top-\mbox{$m$} SAEs on the toy model above with $(d,K,m,N)=(80,100,5,100{,}000)$ and a Zipf support prior $\pi_j\propto j^{-\alpha}$, $\alpha=1.2$, which induces a strict ground-truth ordering. 
Hyperparameters are selected by lowest validation reconstruction error at the target sparsity. 
We use the warmup schedule for $k$ (from $K$ down to $m$) and train with \emph{unit sweeping}. 
Qualitative recovery patterns are shown in Fig.~\ref{fig:toy_model_abbr} for Fixed MSAE and OSAE; full results can be found in Fig.~\ref{fig:toy_model}. Quantitative results (mean~$\pm$~std) are summarized in Table~\ref{tab:toy-metrics}. 
Importantly, $\mathrm{Stab}(D,D')$ is averaged over $\binom{10}{2}=45$ seed pairs, while $\mathrm{Stab}(D,D^*)$ and $\mathrm{Ord}(D,D^*)$ are averaged over 10 seed\,$\to D^*$ comparisons; reconstruction loss is MSE on a held-out set.

For each model, we choose the hyperparameter configuration that achieves the lowest validation reconstruction error at the target sparsity level.

To stabilize training and improve recovery, we adopt a warmup strategy for top-$k$ truncation. Specifically, we begin training with a large truncation size $k_{\text{init}} \geq m$ (typically $k_{\text{init}} = K$) and gradually decrease $k$ to the target sparsity $m$ over a fixed number of epochs. This schedule smooths the optimization landscape by initially allowing dense activations before progressively enforcing sparsity. We find this warmup significantly improves convergence, particularly for O-SAE and Matryoshka models where early features receive higher training pressure. Additional results are shown in Appendix~\ref{app:toy_model_results}.

\begin{table}[h]
\centering
\small
\begin{tabular}{lcccc}
\toprule
\textbf{Model} 
& \textbf{Stab($D$,$D'$)} 
& \textbf{Stab($D$,$D^*$)}
& \textbf{Ord($D$,$D^*$)}
& \textbf{Reconstruction loss} \\
\midrule
Vanilla SAE
& \(0.572\,(0.00964)\)
& \(0.479\,(0.0104)\)
& \(0.0162\,(0.128)\)
& \(0.0257\,(0.000841)\) \\

Fixed MSAE
& \(0.538\,(0.0114)\)
& \(0.502\,(0.0160)\)
& \(0.119\,(0.673)\)
& \(0.0339\,(0.00635)\) \\

Random MSAE
& \(0.531\,(0.0150)\)
& \(0.480\,(0.0106)\)
& \(0.0544\,(0.0760)\)
& \(0.0309\,(0.00366)\) \\

\textbf{OSAE (ours)} &
{\boldmath\(0.664\,(0.0191)\)} &
{\boldmath\(0.814\,(0.0195)\)} &
{\boldmath\(0.734\,(0.0758)\)} &
{\boldmath\(0.00725\,(0.000746)\)} \\
\bottomrule
\end{tabular}
\caption{Summary metrics on the toy Gaussian model (mean\,$\pm$\,std).}
\label{tab:toy-metrics}
\end{table}

We found it surprising that our O-SAEs achieved the lowest reconstruction loss here despite possessing inductive biases that restrict the solution class. Thus for a controlled evalation with minimal influence from hyperparameter bias, we ran further evaluations for O-SAEs in Appendix \ref{app:john_toy_data} directly extending \citet{song2025mechanistic}'s synthetic experiments. There, O-SAEs achieve a similar consistency and higher orderedness compared to baseline architectures despite a higher MSE loss, helping support our findings here.

\begin{figure}
    \centering
    \includegraphics[width=.8\linewidth]{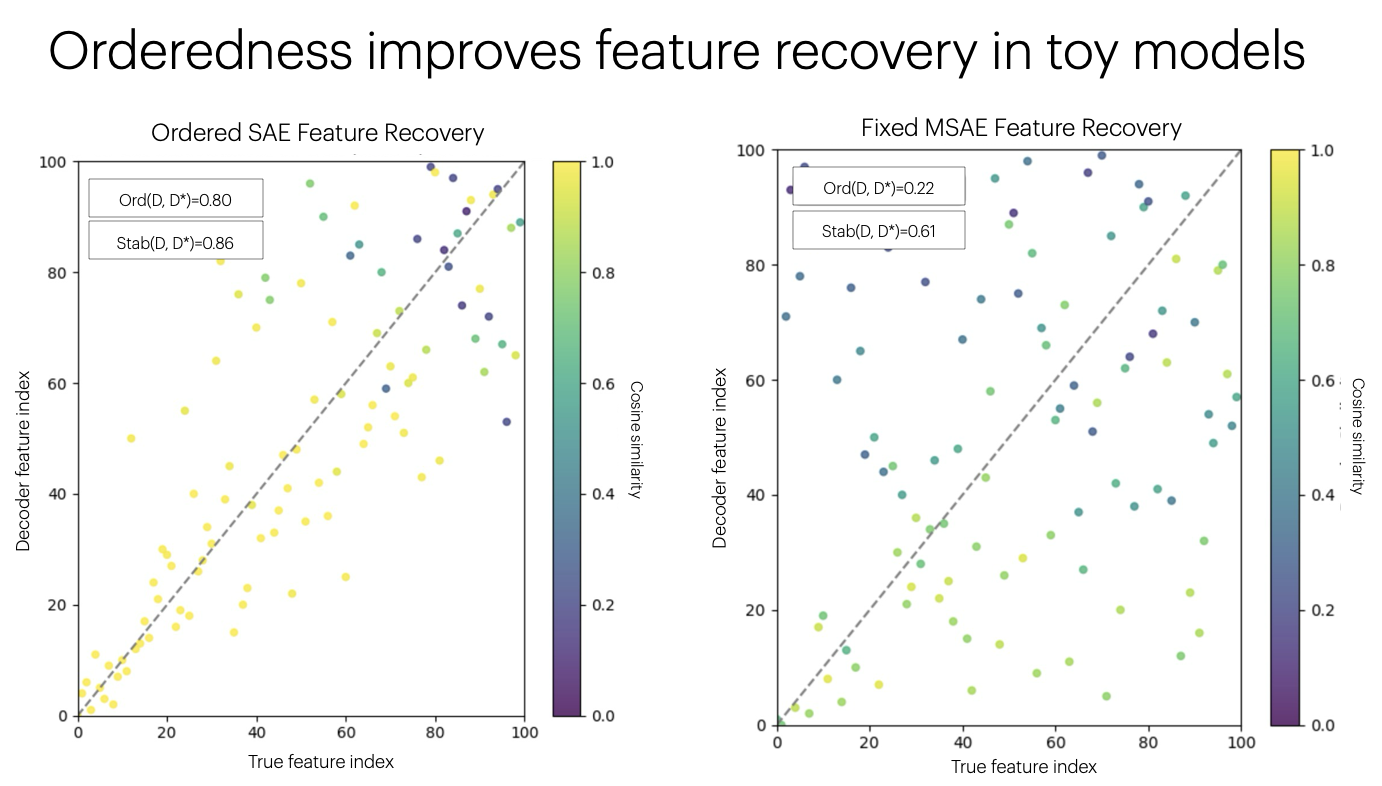}
   \caption{Recovery across SAE variants on a Gaussian toy model with $(d,K,m,N)=(80,100,5,100\,000)$. Each panel plots the Hungarian matching between learned decoder atoms $D$ and ground truth $D^{*}$ (one dot per matched pair; color encodes cosine similarity). Ordered SAEs achieve higher stability $\mathrm{Stab}(D,D^{*})$ (mean matched cosine) and higher orderedness $\mathrm{Ord}(D,D^{*})$ (order agreement), meaning they recover features more faithfully and in order.}
    \label{fig:toy_model_abbr}

\end{figure}

\section{Results on empirical data}

\subsection{Text-Based Evaluations on Gemma-2 2B}

\textbf{Setup.} We evaluate the orderedness and stability of pairs of random seeds for Ordered SAEs and Matryoshka SAE variants trained on Gemma-2 2B \citep{gemmateam2024gemma2improvingopen}. We train these SAEs with dictionary size 4096 on layer 12 with K=80, collecting activations from Gemma-2 2B on an uncopyrighted portion of the Pile \citep{gao2020pile800gbdatasetdiverse}. We use the same piecewise distribution for the O-SAE as Matryoshka baselines for these experiments. Unit sweeping is not employed for these experiments. We note that O-SAEs have slightly worse reconstruction loss compared to baselines, potentially due to a restricted solution space or, for a simpler explanation, our limited hyperparameter sweeps. Thus, we compare checkpoints with similar loss values for a fair orderedness and stability comparison.

\textbf{O-SAEs achieve greater orderedness and stability in Gemma-2 2B.} In Figure \ref{fig:main_prefix_plots}, we demonstrate that O-SAEs achieve better orderedness (defined in \ref{sec:orderedness}) than the Fixed-prefix MSAE and Random-prefix MSAE with 5 groups. Although the overall average stability (defined in \ref{sec:consistency}) is lower for O-SAE than the Matryoshka variants, we find that the most significant features have higher stability. For both metrics, we calculate the truncated metrics for the first $p$ prefix length features. We see that O-SAEs have a lower prefix-length stability compared to the Random MSAE at a crossover point between 1024 and 2048 features, where features are less significant in the feature ordering.

\subsection{Consistency across datasets on Pythia 70m}
We furthermore test the generalizability of orderedness and stability of O-SAEs trained on a different model, Pythia-70M \citep{biderman2023pythiasuiteanalyzinglarge}, and also evaluate consistency when activations are collected on different datasets. Both the Pile \citep{gao2020pile800gbdatasetdiverse} and Dolma \citep{soldaini2024dolmaopencorpustrillion} are diverse and general pretraining mixes, so we select these datasets to represent a wide distribution of language while not drawing from the exact same data sources. If SAEs learn a good representation of general language, we ideally hope that the representations are consistent across datasets. We evaluate (1) same-dataset consistency similar to the previous section and (2) cross-dataset consistency by evaluating pairs of SAEs where one is trained on activations on the Pile and the other on activations from Dolma.

\textbf{Setup.} We train with the same hyperparameters as Gemma2-2B, although Pythia-70M is much smaller and has fewer layers. We use layer 3 of Pythia-70M, roughly halfway, to approximate the same position as layer 12 in Gemma2-2B. We train five seeds per (model, dataset) pair. In these experiments, we evaluate checkpoints after training for 50M tokens, reaching 0.02-0.03 recon loss.

\begin{figure}[htbp]
  \centering
  \begin{subfigure}[b]{0.5\textwidth}
    \centering
    \includegraphics[width=\textwidth]{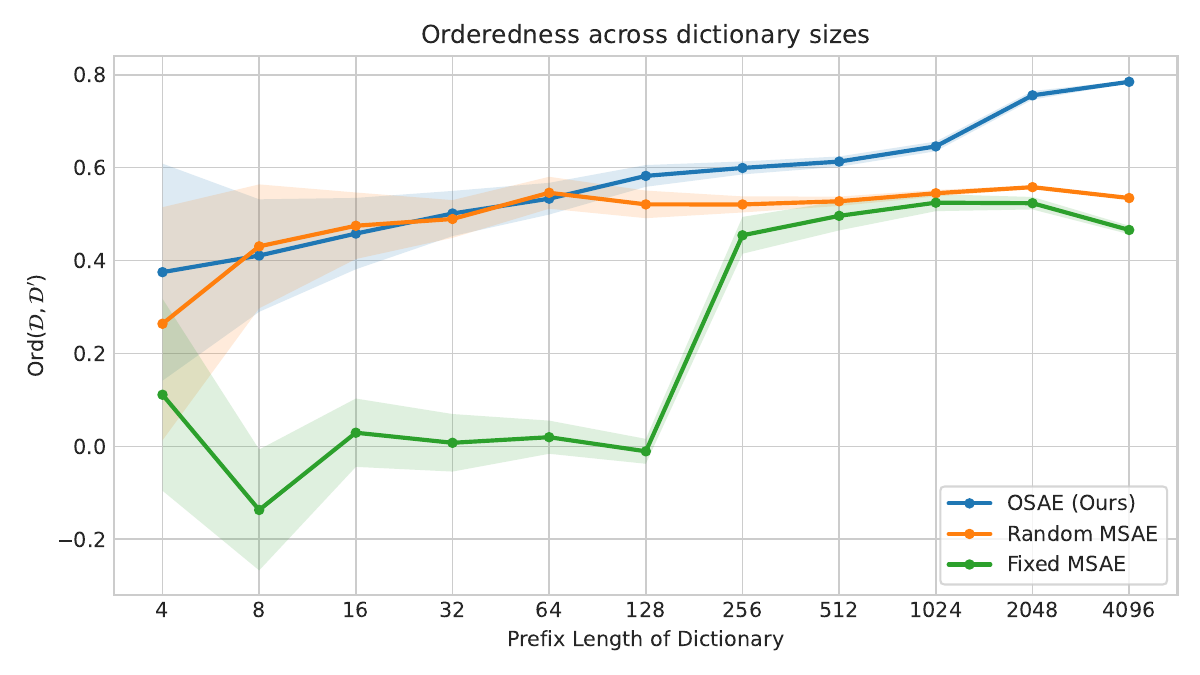}
    \caption{\textbf{Ord($D$,$D'$)}}
  \end{subfigure}%
  \begin{subfigure}[b]{0.5\textwidth}
    \centering
    \includegraphics[width=\textwidth]{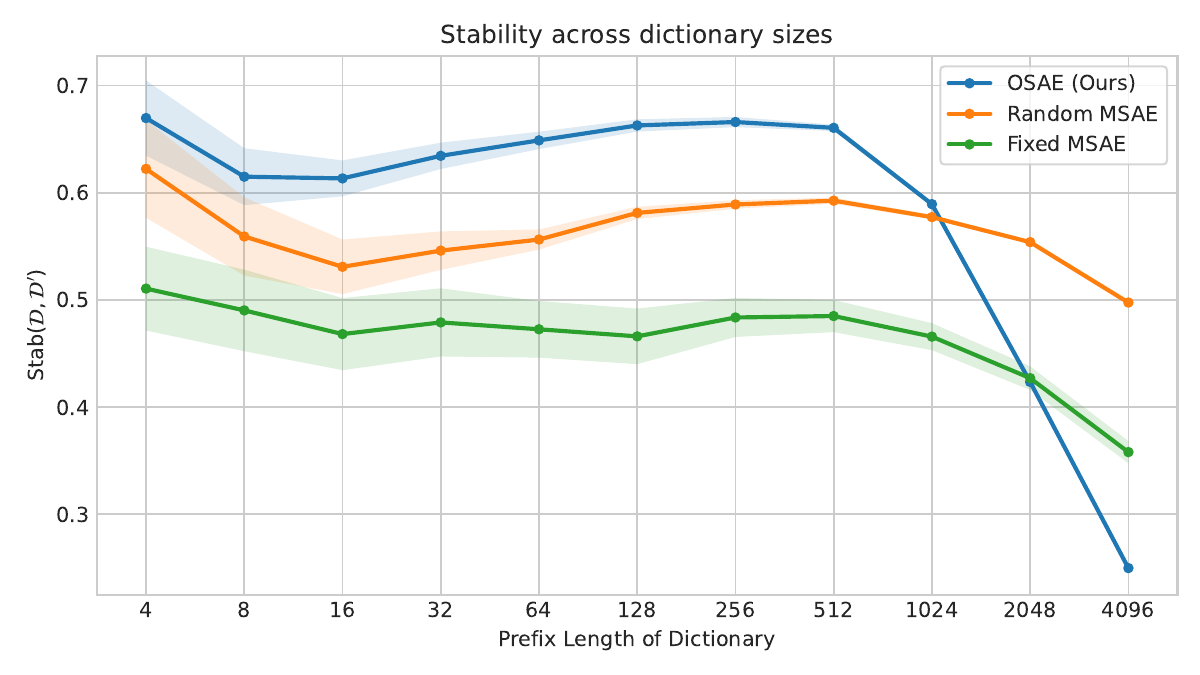}
    \caption{\textbf{Stab($D$,$D'$)}}
  \end{subfigure}%

  \caption{ SAEs trained on Gemma2-2B. (a) Orderedness evaluated at different prefix lengths. O-SAE's have the most consistently ordered features almost reaching an average \textbf{Ord($D$,$D'$)} of 0.8. As expected, we observe orderedness close to 0.0 for the first 128 features of the Fixed MSAE since the first group size is 128, whereafter it jumps up to values between around 0.5. (b) O-SAEs have high stability for the first portion of features, before a sharp decline for later features. $\binom{9}{2}=36$ pairs of seeds are evaluated per method and $95\%$ confidence intervals are visualized. }
  \label{fig:main_prefix_plots}
\end{figure}

\textbf{Similar trends on the Pile and Dolma datasets.} In Figure \ref{fig:pythia_prefix_plots_dolma_pile}, we evaluate same-dataset orderedness and stability on the SAEs trained on the Pile and Dolma. We see similar trends of higher orderedness for O-SAEs while stability degrades after time. All SAE variants maintain similar performance between the Pile and Dolma in same-dataset evaluations. 

We include additional visualizations of how measures of orderedness and stability evolve with increasing amounts of training on same-dataset evaluations on the Pile and Dolma in Figure \ref{fig:time_checkpoints_orderedness} and \ref{fig:time_checkpoints_stability} in Appendix \ref{app:empirical_results}. Interestingly, there are some cases for both O-SAEs and Random MSAEs where as training progresses, high levels of orderedness and stability in earlier prefixes decrease while these measures at later prefixes increase. We suspect this may be an artifact of the probability distribution or over-training, but we plan to run more ablations. 

\textbf{O-SAEs also improve orderedness in cross-dataset comparisons.} In Figure \ref{fig:pythia_prefix_plots_cross_dataset}, we evaluate orderedness and stability of SAEs in cross-dataset settings, where one SAE is trained on the Pile and the other on Dolma. We observe that cross-dataset orderedness and stability matches trends from same-dataset results in Figure \ref{fig:pythia_prefix_plots_dolma_pile}, with a decline of around 0.1-0.2 from same-dataset results. 

When we use same-seed initialization, O-SAEs achieve near-1.0 value in Orderedness and stability of 0.8 at full prefix length, when trained on different datasets. Orderedness and stability also increase in early prefix dictionary positions compared to the cross-seed settings for both O-SAEs and Random MSAEs; however, increases are more substantial for O-SAEs. We hypothesize the greater jump in full-prefix orderedness and stability metrics for O-SAEs are because they do not update their latter indices as much as earlier indices; this is further supported by Figure \ref{fig:osae_sameseed} and \ref{fig:randmat_sameseed} in Appendix \ref{app:empirical_results} showing minimal full-prefix changes in orderedness when comparing trained models against their initialization state for O-SAEs and significant full-prefix drops for Random MSAEs when comparing trained models against their initialization state.

\begin{figure}[t]
  \centering
  \begin{subfigure}[b]{\textwidth}
    \centering
    \includegraphics[width=0.49\textwidth]{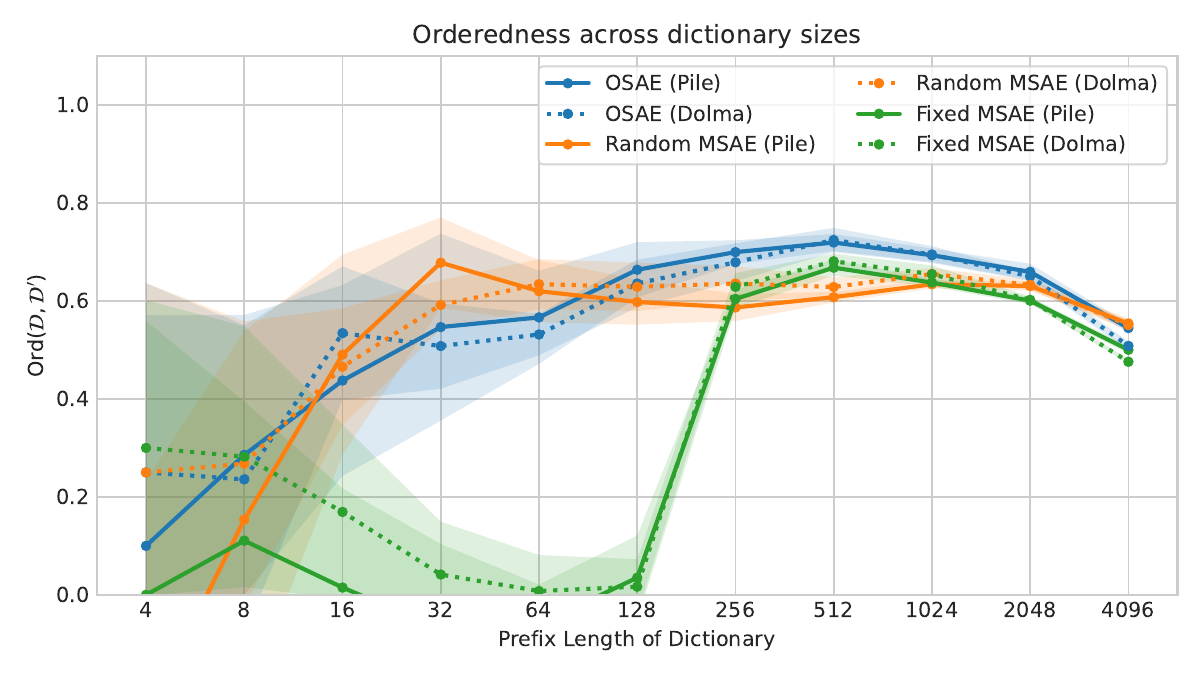}
    \includegraphics[width=0.49\textwidth]{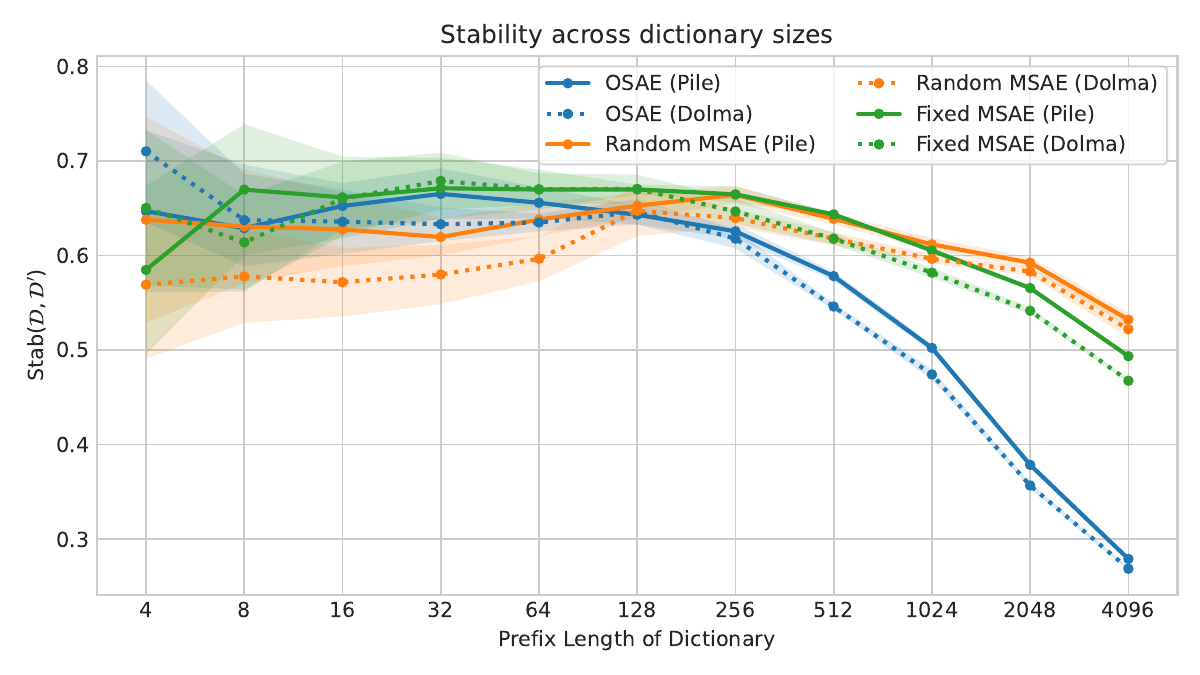}
    \caption{\textbf{Ord($D$,$D'$)} and \textbf{Stab($D$,$D'$)} comparing SAEs trained on The Pile and Dolma}
    \label{fig:pythia_prefix_plots_dolma_pile}
  \end{subfigure}%
  \\
  \begin{subfigure}[b]{\textwidth}
    \centering
    \includegraphics[width=0.49\textwidth]{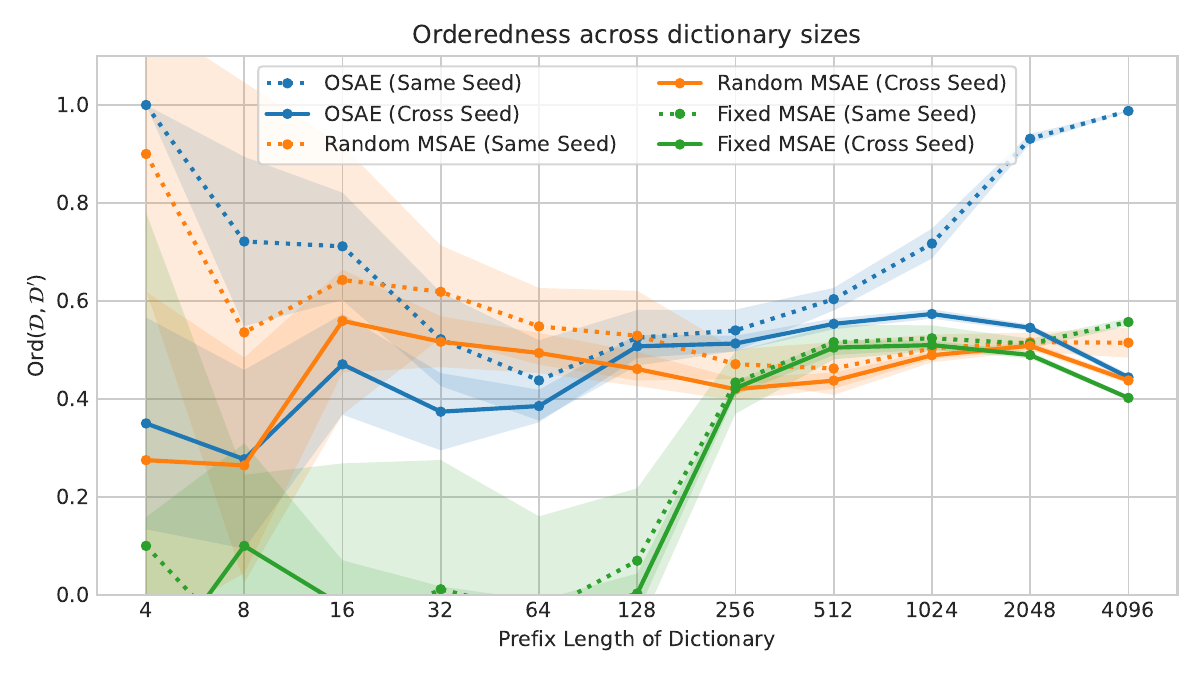}
    \includegraphics[width=0.49\textwidth]{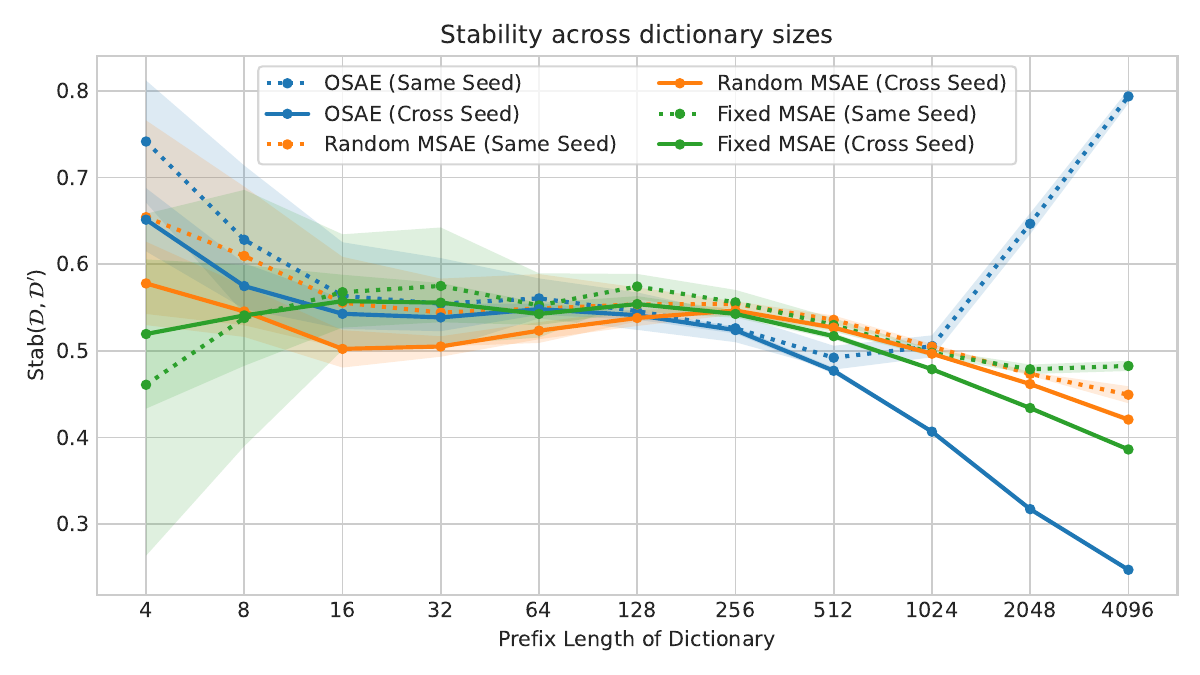}
    \caption{\textbf{Ord($D$,$D'$)} and \textbf{Stab($D$,$D'$)} on Cross-dataset setting}
    \label{fig:pythia_prefix_plots_cross_dataset}
  \end{subfigure}%

  \caption{ SAEs trained on Pythia-70M (a) O-SAEs demonstrate an improvement in orderedness over Random MSAEs and Fixed MSAEs on the Pile and Dolma after prefix length 128. O-SAE stability is likewise stronger than Random MSAE on both datasets for the beginning features, but crosses over at around prefix length 128.  Fixed MSAE stability is higher than O-SAE, but has lower orderedness. (n=10). (b) In the cross-dataset, cross-seed setting we observe that O-SAE has modest improvements in orderedness against Random MSAE and sizable stability gains against Random MSAE before prefix 128. O-SAEs and Random MSAEs demonstrate improvements to orderedness and stability when using the same seed despite training on different datasets; however, the improvements are larger for O-SAEs. (Cross-Seed: n=20. Same-Seed: n=5) }
  \label{fig:pythia_prefix_plots}
\end{figure}









\section{Limitations}
Our theoretical guarantees assume idealized conditions—sparsity and uniqueness assumptions and a well-specified ordering prior—that may not hold exactly in real data. If the imposed order is misspecified, the objective can over-regularize and suppress equally valid alternative bases. The training cost for O-SAEs is higher because covering prefixes requires more compute, so practical deployments may require approximations or careful schedule design. Performance is also sensitive to design choices such as the prefix distribution and unit sweeping. Finally, our empirical study focuses on controlled settings intended to evaluate the mechanism rather than to exhaustively benchmark task performance across architectures.

\section{Discussion}
By optimizing an ordered, nested-prefix objective, we \emph{shrink the solution class} of sparse autoencoders. The plain reconstruction loss admits large equivalence classes (permutations and near-mixings of features) that undermine reproducibility; the ordered objective effectively selects a canonical basis. This narrowing of admissible solutions is a training-time structural prior, which (i) constrains the hypothesis space, (ii) alters the optimization geometry toward a feature-curriculum, and 
(iii) makes the learned representation more comparable across runs and hyperparameters. We view ordering as a general mechanism for enforcing identifiability into overcomplete models, complementary to sparsity and incoherence assumptions in classical dictionary learning. Furthermore, we provide empirical results for O-SAEs on Gemma2-2B and Pythia 70m, trained on the Pile and Dolma, that demonstrate greater orderedness and stability in earlier features, while sometimes at the cost of lower stability in later, less-significant features.

\newpage

\subsection*{Use of LLMs}
We made limited use of LLMs during paper preparation. Specifically, we used them to help write scripts for generating plots, and to suggest edits aimed at improving clarity of the text. All scientific claims are validated by the authors.

\bibliographystyle{unsrtnat}
\bibliography{references}

@article{vincent2010stacked,
  title={Stacked denoising autoencoders: Learning useful representations in a deep network with a local denoising criterion},
  author={Vincent, Pascal and Larochelle, Hugo and Lajoie, Isabelle and Bengio, Yoshua and Manzagol, Pierre-Antoine},
  journal={Journal of Machine Learning Research},
  volume={11},
  number={12},
  pages={3371--3408},
  year={2010}
}

@inproceedings{coates2011importance,
  title={An analysis of single-layer networks in unsupervised feature learning},
  author={Coates, Adam and Ng, Andrew Y and Lee, Honglak},
  booktitle={AISTATS},
  year={2011}
}

@article{ng2011sparse,
  title={Sparse autoencoder},
  author={Ng, Andrew},
  journal={CS294A Lecture notes},
  volume={72},
  number={2011},
  pages={1--19},
  year={2011}
}

@article{song2025mechanistic,
  title={Position: Mechanistic Interpretability Should Prioritize Feature Consistency in SAEs},
  author={Song, Xiangchen and Zhao, Sarah and Lee, Aidan and Wang, Binxu and Konkle, Talia},
  journal={arXiv preprint arXiv:2505.20254},
  year={2025}
}

@article{fel2025archetypalsaeadaptivestable,
  title={Archetypal SAE: Adaptive and stable dictionary learning for concept extraction in large vision models},
  author={Fel, Thomas and Lubana, Ekdeep Singh and Prince, Jacob S and Kowal, Matthew and Boutin, Victor and Papadimitriou, Isabel and Wang, Binxu and Wattenberg, Martin and Ba, Demba and Konkle, Talia},
  journal={arXiv preprint arXiv:2502.12892},
  year={2025}
}

@article{lee2025evaluating,
  title={Evaluating and Designing Sparse Autoencoders by Approximating Quasi-Orthogonality},
  author={Lee, Hongyu and Fang, Kevin and Zhao, Xinyue and Wang, Binxu and Ba, Demba and Konkle, Talia},
  journal={arXiv preprint arXiv:2503.24277},
  year={2025}
}

@inproceedings{jenatton2011structured,
  title={Structured sparse principal component analysis},
  author={Jenatton, Rodolphe and Audibert, Jean-Yves and Bach, Francis},
  booktitle={AISTATS},
  pages={366--373},
  year={2010}
}

@inproceedings{ghorbani2020neuron,
  title={Neuron Shapley: Discovering the Responsible Neurons},
  author={Ghorbani, Amirata and Wexler, James and Zou, James and Kim, Been},
  booktitle={International Conference on Artificial Intelligence and Statistics},
  pages={519--530},
  year={2020}
}

@article{bussmann2025learningmultilevelfeaturesmatryoshka,
  title={Learning Multi-Level Features with Matryoshka Sparse Autoencoders},
  author={Bussmann, Bart and Nabeshima, Noa and Karvonen, Adam and Nanda, Neel},
  journal={arXiv preprint arXiv:2503.17547},
  year={2025}
}

@inproceedings{rippel2014learningorderedrepresentationsnested,
  title={Learning ordered representations with nested dropout},
  author={Rippel, Oren and Gelbart, Michael A. and Adams, Ryan P.},
  booktitle={International Conference on Machine Learning},
  pages={1746--1754},
  year={2014}
}

@article{hillar2015when,
  title={When can dictionary learning uniquely recover sparse data from subsamples?},
  author={Hillar, Christopher J. and Sommer, Friedrich T.},
  journal={arXiv preprint arXiv:1106.3616},
  year={2015}
}

@article{baldi1989neural,
  title={Neural networks and principal component analysis: Learning from examples without local minima},
  author={Baldi, Pierre and Hornik, Kurt},
  journal={Neural Networks},
  volume={2},
  number={1},
  pages={53--58},
  year={1989}
}

@inproceedings{bourlard1988auto,
  title={Auto-association by multilayer perceptrons and singular value decomposition},
  author={Bourlard, Herv{\'e} and Kamp, Yves},
  booktitle={Biological Cybernetics},
  volume={59},
  number={4},
  pages={291--294},
  year={1988}
}

@article{plaut2018principal,
  title={From principal subspaces to principal components with linear autoencoders},
  author={Plaut, Eugene},
  journal={arXiv preprint arXiv:1804.10253},
  year={2018}
}

@inproceedings{pati1993omp,
  title={Orthogonal matching pursuit: Recursive function approximation with applications to wavelet decomposition},
  author={Pati, Y. C. and Rezaiifar, R. and Krishnaprasad, P. S.},
  booktitle={Conference Record of The Twenty-Seventh Asilomar Conference on Signals, Systems and Computers},
  pages={40--44},
  year={1993},
  organization={IEEE}
}

@article{tropp2007signal,
  title={Signal recovery from random measurements via orthogonal matching pursuit},
  author={Tropp, Joel A. and Gilbert, Anna C.},
  journal={IEEE Transactions on Information Theory},
  volume={53},
  number={12},
  pages={4655--4666},
  year={2007}
}

@article{aharon2006ksvd,
  title={K-SVD: An algorithm for designing overcomplete dictionaries for sparse representation},
  author={Aharon, Michal and Elad, Michael and Bruckstein, Alfred},
  journal={IEEE Transactions on Signal Processing},
  volume={54},
  number={11},
  pages={4311--4322},
  year={2006}
}

@article{mairal2010online,
  title={Online learning for matrix factorization and sparse coding},
  author={Mairal, Julien and Bach, Francis and Ponce, Jean and Sapiro, Guillermo},
  journal={Journal of Machine Learning Research},
  volume={11},
  pages={19--60},
  year={2010}
}

@article{tropp2004greed,
  title={Greed is good: Algorithmic results for sparse approximation},
  author={Tropp, Joel A},
  journal={IEEE Transactions on Information Theory},
  volume={50},
  number={10},
  pages={2231--2242},
  year={2004}
}

@article{candes2005decoding,
  title={Decoding by linear programming},
  author={Candes, Emmanuel J and Tao, Terence},
  journal={IEEE Transactions on Information Theory},
  volume={51},
  number={12},
  pages={4203--4215},
  year={2005}
}

@misc{gemmateam2024gemma2improvingopen,
      title={Gemma 2: Improving Open Language Models at a Practical Size}, 
      author={Gemma Team and Morgane Riviere and Shreya Pathak and Pier Giuseppe Sessa and Cassidy Hardin and Surya Bhupatiraju and Léonard Hussenot and Thomas Mesnard and Bobak Shahriari and Alexandre Ramé and Johan Ferret and Peter Liu and Pouya Tafti and Abe Friesen and Michelle Casbon and Sabela Ramos and Ravin Kumar and Charline Le Lan and Sammy Jerome and Anton Tsitsulin and Nino Vieillard and Piotr Stanczyk and Sertan Girgin and Nikola Momchev and Matt Hoffman and Shantanu Thakoor and Jean-Bastien Grill and Behnam Neyshabur and Olivier Bachem and Alanna Walton and Aliaksei Severyn and Alicia Parrish and Aliya Ahmad and Allen Hutchison and Alvin Abdagic and Amanda Carl and Amy Shen and Andy Brock and Andy Coenen and Anthony Laforge and Antonia Paterson and Ben Bastian and Bilal Piot and Bo Wu and Brandon Royal and Charlie Chen and Chintu Kumar and Chris Perry and Chris Welty and Christopher A. Choquette-Choo and Danila Sinopalnikov and David Weinberger and Dimple Vijaykumar and Dominika Rogozińska and Dustin Herbison and Elisa Bandy and Emma Wang and Eric Noland and Erica Moreira and Evan Senter and Evgenii Eltyshev and Francesco Visin and Gabriel Rasskin and Gary Wei and Glenn Cameron and Gus Martins and Hadi Hashemi and Hanna Klimczak-Plucińska and Harleen Batra and Harsh Dhand and Ivan Nardini and Jacinda Mein and Jack Zhou and James Svensson and Jeff Stanway and Jetha Chan and Jin Peng Zhou and Joana Carrasqueira and Joana Iljazi and Jocelyn Becker and Joe Fernandez and Joost van Amersfoort and Josh Gordon and Josh Lipschultz and Josh Newlan and Ju-yeong Ji and Kareem Mohamed and Kartikeya Badola and Kat Black and Katie Millican and Keelin McDonell and Kelvin Nguyen and Kiranbir Sodhia and Kish Greene and Lars Lowe Sjoesund and Lauren Usui and Laurent Sifre and Lena Heuermann and Leticia Lago and Lilly McNealus and Livio Baldini Soares and Logan Kilpatrick and Lucas Dixon and Luciano Martins and Machel Reid and Manvinder Singh and Mark Iverson and Martin Görner and Mat Velloso and Mateo Wirth and Matt Davidow and Matt Miller and Matthew Rahtz and Matthew Watson and Meg Risdal and Mehran Kazemi and Michael Moynihan and Ming Zhang and Minsuk Kahng and Minwoo Park and Mofi Rahman and Mohit Khatwani and Natalie Dao and Nenshad Bardoliwalla and Nesh Devanathan and Neta Dumai and Nilay Chauhan and Oscar Wahltinez and Pankil Botarda and Parker Barnes and Paul Barham and Paul Michel and Pengchong Jin and Petko Georgiev and Phil Culliton and Pradeep Kuppala and Ramona Comanescu and Ramona Merhej and Reena Jana and Reza Ardeshir Rokni and Rishabh Agarwal and Ryan Mullins and Samaneh Saadat and Sara Mc Carthy and Sarah Cogan and Sarah Perrin and Sébastien M. R. Arnold and Sebastian Krause and Shengyang Dai and Shruti Garg and Shruti Sheth and Sue Ronstrom and Susan Chan and Timothy Jordan and Ting Yu and Tom Eccles and Tom Hennigan and Tomas Kocisky and Tulsee Doshi and Vihan Jain and Vikas Yadav and Vilobh Meshram and Vishal Dharmadhikari and Warren Barkley and Wei Wei and Wenming Ye and Woohyun Han and Woosuk Kwon and Xiang Xu and Zhe Shen and Zhitao Gong and Zichuan Wei and Victor Cotruta and Phoebe Kirk and Anand Rao and Minh Giang and Ludovic Peran and Tris Warkentin and Eli Collins and Joelle Barral and Zoubin Ghahramani and Raia Hadsell and D. Sculley and Jeanine Banks and Anca Dragan and Slav Petrov and Oriol Vinyals and Jeff Dean and Demis Hassabis and Koray Kavukcuoglu and Clement Farabet and Elena Buchatskaya and Sebastian Borgeaud and Noah Fiedel and Armand Joulin and Kathleen Kenealy and Robert Dadashi and Alek Andreev},
      year={2024},
      eprint={2408.00118},
      archivePrefix={arXiv},
      primaryClass={cs.CL},
      url={https://arxiv.org/abs/2408.00118}, 
}

@misc{gao2020pile800gbdatasetdiverse,
      title={The Pile: An 800GB Dataset of Diverse Text for Language Modeling}, 
      author={Leo Gao and Stella Biderman and Sid Black and Laurence Golding and Travis Hoppe and Charles Foster and Jason Phang and Horace He and Anish Thite and Noa Nabeshima and Shawn Presser and Connor Leahy},
      year={2020},
      eprint={2101.00027},
      archivePrefix={arXiv},
      primaryClass={cs.CL},
      url={https://arxiv.org/abs/2101.00027}, 
}

@misc{biderman2023pythiasuiteanalyzinglarge,
      title={Pythia: A Suite for Analyzing Large Language Models Across Training and Scaling}, 
      author={Stella Biderman and Hailey Schoelkopf and Quentin Anthony and Herbie Bradley and Kyle O'Brien and Eric Hallahan and Mohammad Aflah Khan and Shivanshu Purohit and USVSN Sai Prashanth and Edward Raff and Aviya Skowron and Lintang Sutawika and Oskar van der Wal},
      year={2023},
      eprint={2304.01373},
      archivePrefix={arXiv},
      primaryClass={cs.CL},
      url={https://arxiv.org/abs/2304.01373}, 
}

@misc{soldaini2024dolmaopencorpustrillion,
      title={Dolma: an Open Corpus of Three Trillion Tokens for Language Model Pretraining Research}, 
      author={Luca Soldaini and Rodney Kinney and Akshita Bhagia and Dustin Schwenk and David Atkinson and Russell Authur and Ben Bogin and Khyathi Chandu and Jennifer Dumas and Yanai Elazar and Valentin Hofmann and Ananya Harsh Jha and Sachin Kumar and Li Lucy and Xinxi Lyu and Nathan Lambert and Ian Magnusson and Jacob Morrison and Niklas Muennighoff and Aakanksha Naik and Crystal Nam and Matthew E. Peters and Abhilasha Ravichander and Kyle Richardson and Zejiang Shen and Emma Strubell and Nishant Subramani and Oyvind Tafjord and Pete Walsh and Luke Zettlemoyer and Noah A. Smith and Hannaneh Hajishirzi and Iz Beltagy and Dirk Groeneveld and Jesse Dodge and Kyle Lo},
      year={2024},
      eprint={2402.00159},
      archivePrefix={arXiv},
      primaryClass={cs.CL},
      url={https://arxiv.org/abs/2402.00159}, 
}

@misc{leask2025sparseautoencoderscanonicalunits,
      title={Sparse Autoencoders Do Not Find Canonical Units of Analysis}, 
      author={Patrick Leask and Bart Bussmann and Michael Pearce and Joseph Bloom and Curt Tigges and Noura Al Moubayed and Lee Sharkey and Neel Nanda},
      year={2025},
      eprint={2502.04878},
      archivePrefix={arXiv},
      primaryClass={cs.LG},
      url={https://arxiv.org/abs/2502.04878}, 
}

\newpage
\appendix
\section{Proofs}\label{app:proofs}
\begin{proof}[Proof of Lemma~\ref{lem:nd-implies-full}]
Assume for contradiction that \((D^\star,E^\star)\in\mathcal F\) globally minimizes \(\mathcal L_{\mathrm{ND}}\) but \emph{does not} minimize \(\mathcal L_K\). Write
\[
Z^\star \;=\; E^\star(X), 
\qquad
R^\star \;=\; X - D^\star\,\mathrm{Top}_m(Z^\star),
\qquad
\Delta \;=\;\frac1N\|R^\star\|_F^2 \;=\;\mathcal L_K(D^\star,E^\star)\;>\;0.
\]
Denote the \(K\)th row of \(\mathrm{Top}_m(Z^\star)\) by \(y^\star_{K\cdot}\in\mathbb R^n\).

If \(y^\star_{K\cdot} = 0\) then none of the prefix losses ever see atom \(K\), so we could remove it entirely and re-index, strictly reducing the nested-dropout loss unless \(R^*=0\). Hence for a strict counterexample we may assume \(y^\star_{K\cdot} \neq 0\).”

Set
\[
v \;=\;  R^\star\,y^\star_{K\cdot}
\;=\;\sum_{i=1}^n r^\star_{\cdot i}\,y^\star_{K,i}
\;\in\;\mathbb R^d,
\]
which is nonzero since \(R^\star\neq0\) and \(y^\star_{K\cdot}\neq0\).  Define
\[
u \;=\; \frac{(I - d^\star_K(d^\star_K)^\top)\,v}
           {\bigl\|(I - d^\star_K(d^\star_K)^\top)\,v\bigr\|_2}.
\]
Then \(u\) is unit‐length, \(u\perp d^\star_K\), and
\[
\sum_{i=1}^n y^\star_{K,i}\,(u^\top r^\star_{\cdot i})
=\;u^\top\!\Bigl(\sum_i r^\star_{\cdot i}\,y^\star_{K,i}\Bigr)
=\;u^\top v
=\;\bigl\|(I - d^\star_K(d^\star_K)^\top)\,v\bigr\|_2
\;>\;0.
\]

\medskip
Now for small \(\epsilon>0\) define a perturbation of only the \(K\)th atom:
\[
d_K^{\rm new}
=\frac{d_K^\star + \epsilon\,u}{\|d_K^\star + \epsilon\,u\|_2}
= d_K^\star + \epsilon\,u - \tfrac12\epsilon^2\,d_K^\star + O(\epsilon^3),
\]
and leave \(E^\star\) (hence \(\mathrm{Top}_m(Z)\)) unchanged.  All other atoms remain as in \(D^\star\), so \((D^\epsilon,E^\star)\in\mathcal F\).

Note that every prefix \(\ell<K\) satisfies
\[
D^\epsilon\,\Lambda_\ell\,\mathrm{Top}_m(Z^\star)
=D^\star\,\Lambda_\ell\,\mathrm{Top}_m(Z^\star),
\]
hence
\(\mathcal L_\ell(D^\epsilon,E^\star)=\mathcal L_\ell(D^\star,E^\star)\) for all \(\ell<K\).  Therefore
\[
\mathcal L_{\mathrm{ND}}(D^\epsilon,E^\star)
=\sum_{\ell=1}^K p_\ell\,\mathcal L_\ell(D^\epsilon,E^\star)
=p_K\,\mathcal L_K(D^\epsilon,E^\star)
+\sum_{\ell<K}p_\ell\,\mathcal L_\ell(D^\star,E^\star).
\]
It suffices to show \(\mathcal L_K(D^\epsilon,E^\star)<\mathcal L_K(D^\star,E^\star)\).

Since \(\mathrm{Top}_m(Z)\) is unchanged,
\[
R^\epsilon
= X - D^\epsilon\,\mathrm{Top}_m(Z^\star)
= R^\star
  -\bigl(d_K^{\rm new}-d_K^\star\bigr)\,y^\star_{K\cdot}{}^{\!\top}.
\]
Using
\(d_K^{\rm new}-d_K^\star=\epsilon\,u -\tfrac12\epsilon^2\,d_K^\star+O(\epsilon^3)\),
one finds, entrywise,
\[
r_{\alpha i}^\epsilon
= r_{\alpha i}^\star
  -\epsilon\,u_\alpha\,y^\star_{K,i}
  +O(\epsilon^2).
\]

Squaring and summing,
\[
\|R^\epsilon\|_F^2
=\sum_{\alpha,i}(r_{\alpha i}^\epsilon)^2
=\sum_{\alpha,i}(r_{\alpha i}^\star)^2
 -2\epsilon\sum_i y^\star_{K,i}\,(u^\top r^\star_{\cdot i})
 +O(\epsilon^2).
\]
By our choice of \(u\), the coefficient
\(\sum_i y^\star_{K,i}\,(u^\top r^\star_{\cdot i})\)
is strictly positive, so for sufficiently small \(\epsilon\) the linear term makes
\(\|R^\epsilon\|_F^2<\|R^\star\|_F^2\).  Equivalently,
\[
\mathcal L_K(D^\epsilon,E^\star)
=\tfrac1N\|R^\epsilon\|_F^2
<\tfrac1N\|R^\star\|_F^2
=\mathcal L_K(D^\star,E^\star).
\]

Since all \(\mathcal L_{\ell<K}\) remain fixed,
\[
\mathcal L_{\mathrm{ND}}(D^\epsilon,E^\star)
<\mathcal L_{\mathrm{ND}}(D^\star,E^\star),
\]
contradicting the global minimality of \((D^\star,E^\star)\).  Therefore no residual can remain, and every minimiser of \(\mathcal L_{\mathrm{ND}}\) must satisfy \(\|R\|_F=0\), i.e.\ also minimise \(\mathcal L_K\).
\end{proof}

\section{Toy model results}\label{app:toy_model_results}
\begin{figure}
    \centering
    \includegraphics[width=1\linewidth]{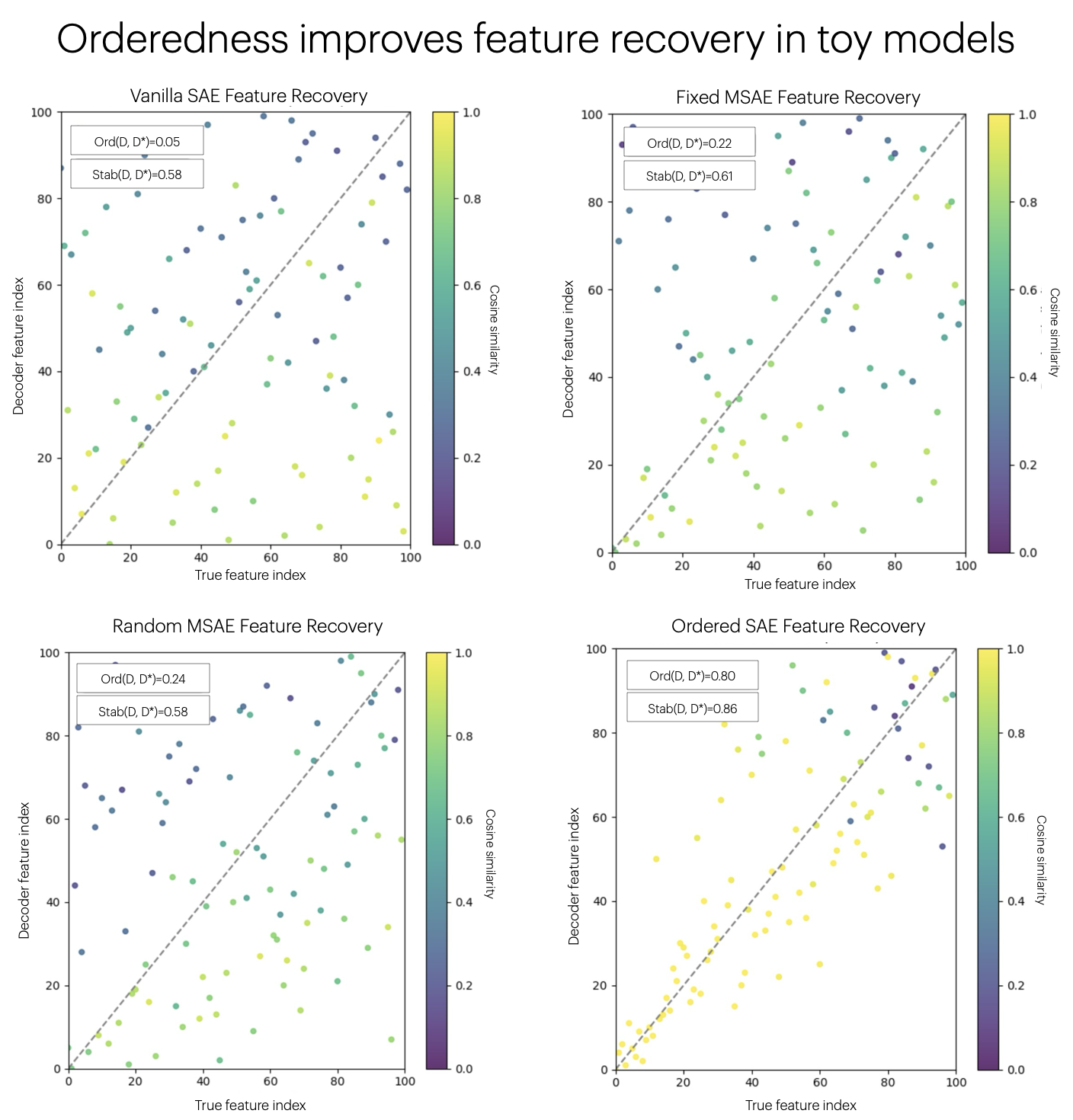}
   \caption{Recovery across SAE variants on a Gaussian toy model with $(d,K,m,N)=(80,100,5,100\,000)$. Each panel plots the Hungarian matching between learned decoder atoms $D$ and ground truth $D^{*}$ (one dot per matched pair; color encodes cosine similarity). Ordered SAEs achieve higher stability $\mathrm{Stab}(D,D^{*})$ (mean matched cosine) and higher orderedness $\mathrm{Ord}(D,D^{*})$ (order agreement), meaning they recover features more faithfully and in order.}

    \label{fig:toy_model}
\end{figure}

\subsection{Activation–stream stability and orderedness}\label{app:B1}
Let $Z^{(s)}=E^{(s)}(X)\in\mathbb{R}^{K\times N}$ be the code matrix for seed $s$ (rows are unit activations over the evaluation set), and let $Y^*\in\mathbb{R}^{K\times N}$ be the ground-truth activations. We replace decoder-space cosine with \emph{Pearson correlation} on the activation stream and compute stability via Hungarian assignment:
\[
R^{(\rho)}_{ij}=\mathrm{corr}\!\big(Z^{(s)}_{i:},\,Y^*_{j:}\big),\qquad
S^{(\rho)}_{ij}=\mathrm{corr}\!\big(Z^{(a)}_{i:},\,Z^{(b)}_{j:}\big).
\]
With $P$ the optimal permutation matrix, we report
\[
\mathrm{Stab}_Z(Z^{(s)},Y^*)=\tfrac{1}{K}\operatorname{tr}\!\big(R^{(\rho)}P\big),\qquad
\mathrm{Stab}_Z(Z^{(a)},Z^{(b)})=\tfrac{1}{K}\operatorname{tr}\!\big(S^{(\rho)}P\big),
\]
and define orderedness on the induced permutation as
\[
\mathrm{Ord}_Z=\mathrm{Spearman}\!\big((1,\dots,K),\,(\mu(1),\dots,\mu(K))\big),
\quad\text{where } \mu \text{ is read off from } P.
\]
These $Z$-based measures complement the decoder–cosine results in the main text. Whereas Fig.~\ref{fig:toy_model} visualizes only matched pairs, the figures below show the \emph{full} $K\times K$ similarity fields. In the top row we also show a small raster (50 evaluation inputs) to compare activation patterns $Y^*$ vs.\ $Z^{(0)}$ vs.\ $Z^{(1)}$. \emph{Note:} for Vanilla and Matryoshka models, activations can be much larger than $Y^*$, so per-panel normalization makes the rasters not directly comparable in scale; this effect is less pronounced for the ordered model.

\begin{figure}[htbp]
  \centering
  \includegraphics[width=\textwidth]{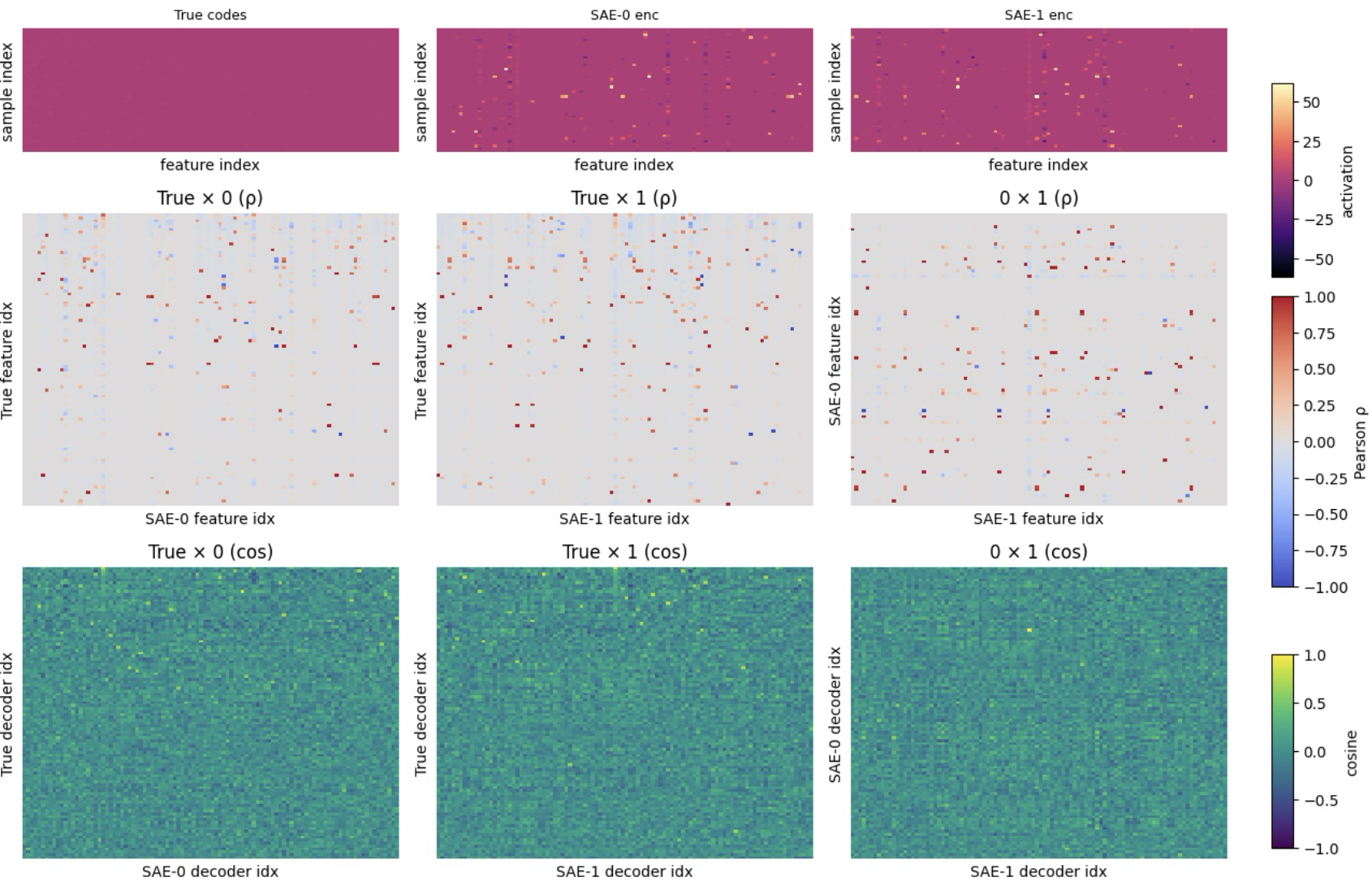}
  \caption{\textbf{Vanilla SAE (example seed pair).} \emph{Top:} activation rasters for 50 eval inputs (left: $Y^*$, middle: $Z^{(0)}$, right: $Z^{(1)}$). \emph{Middle:} all-pairs activation–Pearson matrices (left: $Z^{(0)}$ vs.\ $Y^*$, middle: $Z^{(1)}$ vs.\ $Y^*$, right: $Z^{(0)}$ vs.\ $Z^{(1)}$) used for $\mathrm{Stab}_Z$ and $\mathrm{Ord}_Z$. \emph{Bottom:} all-pairs decoder–cosine matrices (left: $D^{(0)}$ vs.\ $D^*$, middle: $D^{(1)}$ vs.\ $D^*$, right: $D^{(0)}$ vs.\ $D^{(1)}$); this extends Fig.~\ref{fig:toy_model} from matched pairs to all pairs.}
  \label{fig:vanilla_sae_d100_all}
\end{figure}

\begin{figure}[htbp]
  \centering
  \includegraphics[width=\textwidth]{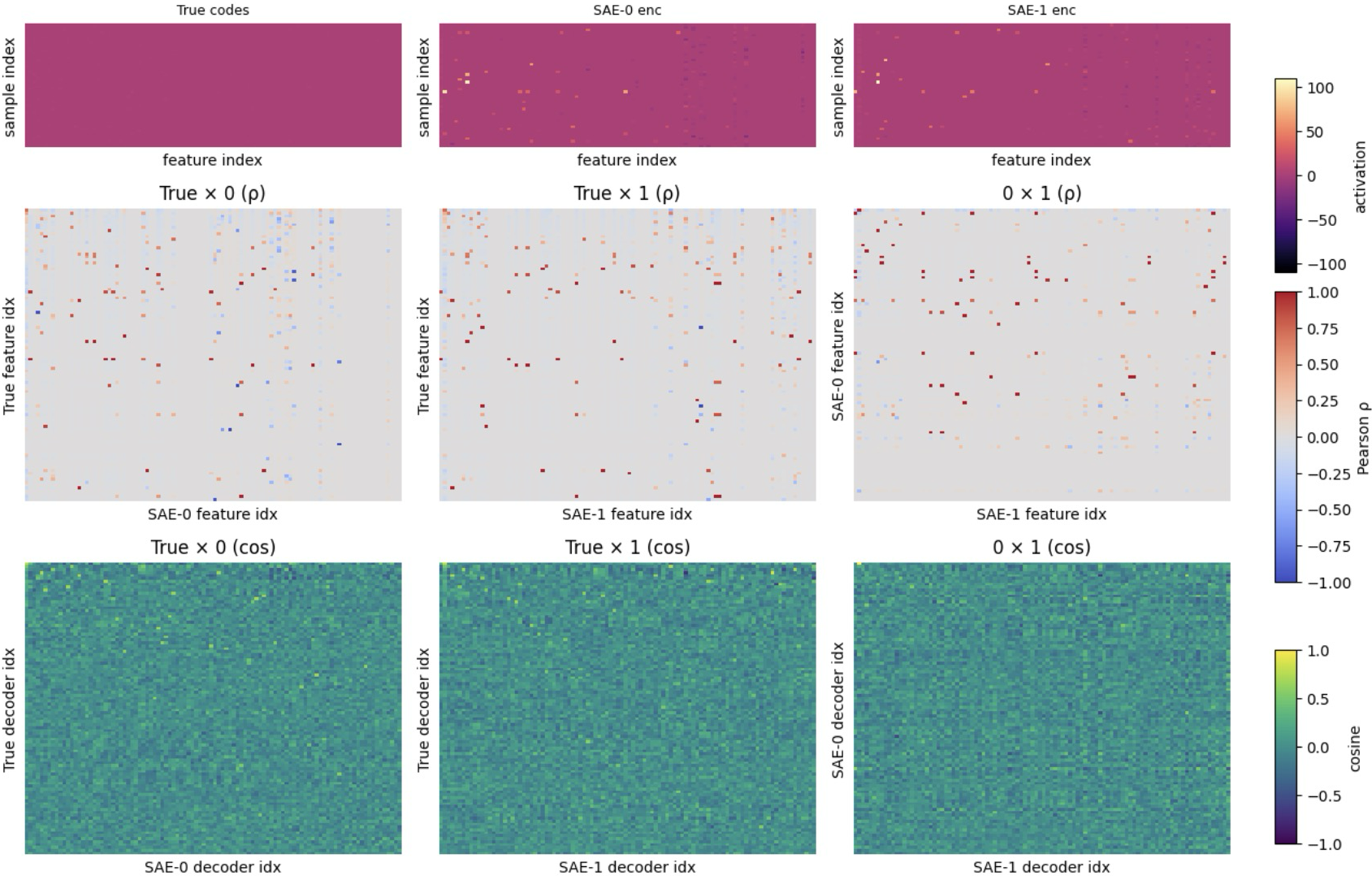}
  \caption{\textbf{Fixed MSAE (example seed pair).} Same layout as Fig.~\ref{fig:vanilla_sae_d100_all}: top—activation rasters ($Y^*$, $Z^{(0)}$, $Z^{(1)}$); middle—all-pairs activation–Pearson ($Z^{(0)}$ vs.\ $Y^*$, $Z^{(1)}$ vs.\ $Y^*$, $Z^{(0)}$ vs.\ $Z^{(1)}$); bottom—all-pairs decoder–cosine ($D^{(0)}$ vs.\ $D^*$, $D^{(1)}$ vs.\ $D^*$, $D^{(0)}$ vs.\ $D^{(1)}$).}
  \label{fig:fixed_msae_d100_all}
\end{figure}

\begin{figure}[htbp]
  \centering
  \includegraphics[width=\textwidth]{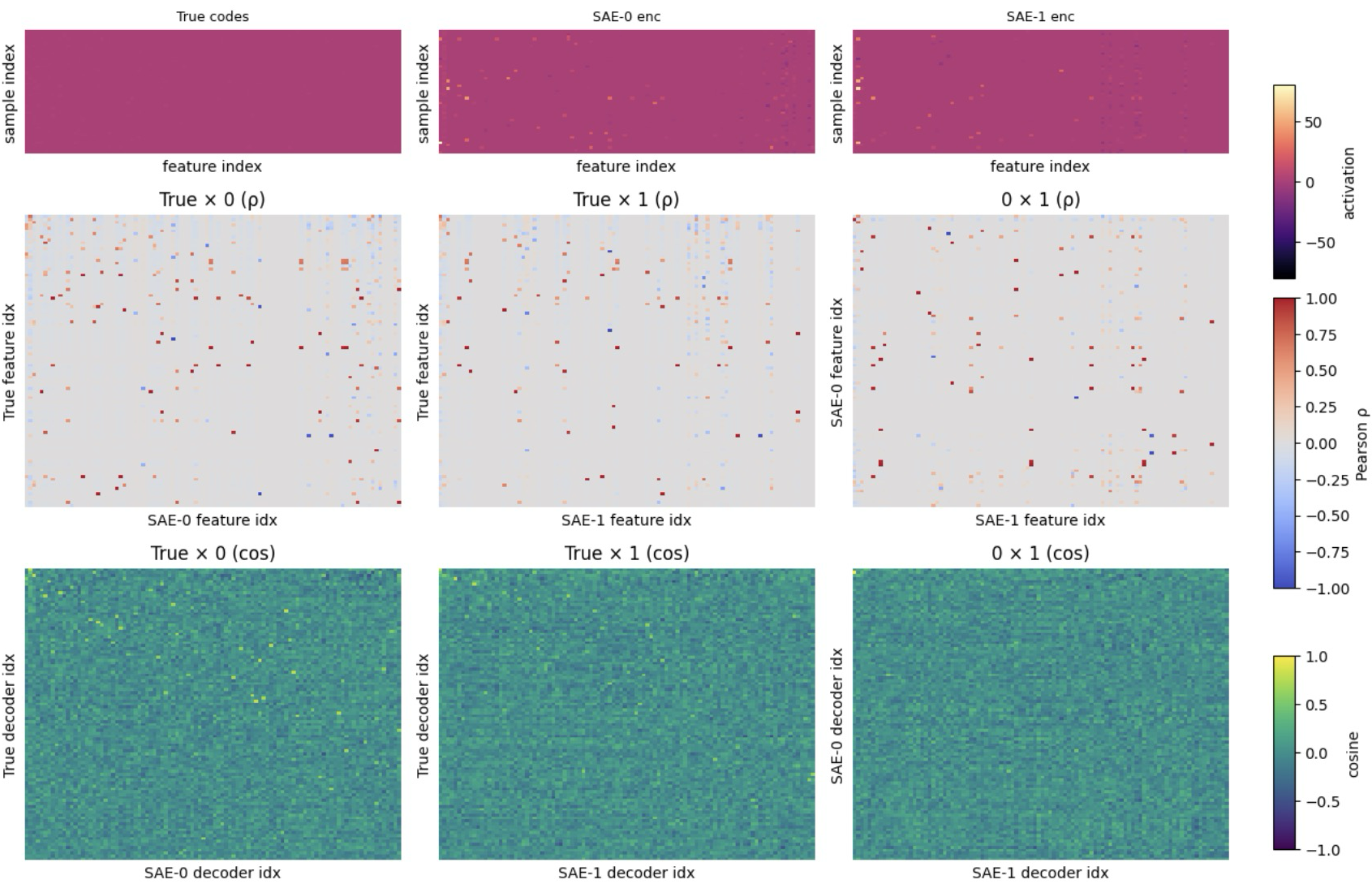}
  \caption{\textbf{Random MSAE (example seed pair).} Same layout as Fig.~\ref{fig:vanilla_sae_d100_all}; top—activation rasters; middle—activation–Pearson; bottom—decoder–cosine; columns are ($0$ vs.\ $Y^*$), ($1$ vs.\ $Y^*$), ($0$ vs.\ $1$).}
  \label{fig:random_msae_d100_all}
\end{figure}

\begin{figure}[htbp]
  \centering
  \includegraphics[width=\textwidth]{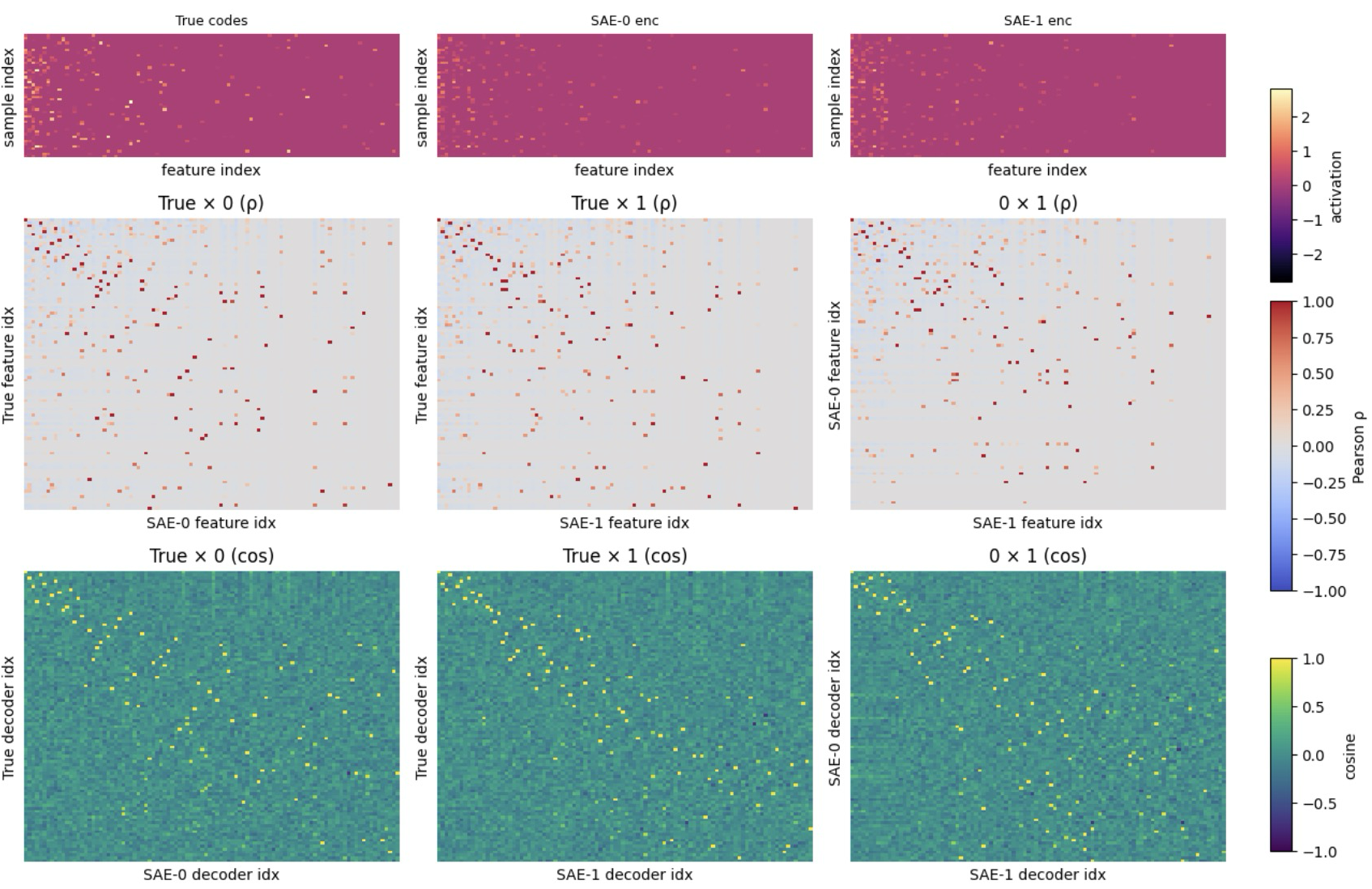}
  \caption{\textbf{O-SAE (example seed pair).} Same layout as Fig.~\ref{fig:vanilla_sae_d100_all}. Top row rasters (50 inputs); middle row all-pairs activation–Pearson for $\mathrm{Stab}_Z$/$\mathrm{Ord}_Z$; bottom row all-pairs decoder–cosine extending Fig.~\ref{fig:toy_model} to all pairs.}
  \label{fig:osae_d100_all}
\end{figure}

\subsection{Additional dictionary sizes}

We repeat the activation–stream analysis at smaller widths with matching sparsities,
\((K,m)\in\{(10,2),(30,3),(50,5)\}\).
Each panel uses the same three-row layout as Appendix \ref{app:B1}:
\emph{top}—activation rasters for 50 evaluation inputs (left: $Y^*$, middle: $Z^{(0)}$, right: $Z^{(1)}$);
\emph{middle}—all-pairs activation–Pearson matrices (left: $Z^{(0)}$ vs.\ $Y^*$, middle: $Z^{(1)}$ vs.\ $Y^*$, right: $Z^{(0)}$ vs.\ $Z^{(1)}$) used for $\mathrm{Stab}_Z$ and $\mathrm{Ord}_Z$;
\emph{bottom}—all-pairs decoder–cosine matrices (left: $D^{(0)}$ vs.\ $D^*$, middle: $D^{(1)}$ vs.\ $D^*$, right: $D^{(0)}$ vs.\ $D^{(1)}$).
Figures show a \emph{single example seed pair} for brevity; Matryoshka variants are omitted.
Note that in Vanilla, activations can be much larger than $Y^*$, so per-panel normalization of the top row can make scales not directly comparable; this effect is less pronounced for O-SAE.

\begin{figure*}[t]
  \centering

  \begin{subfigure}[t]{\textwidth}
    \centering
    \includegraphics[width=.49\linewidth]{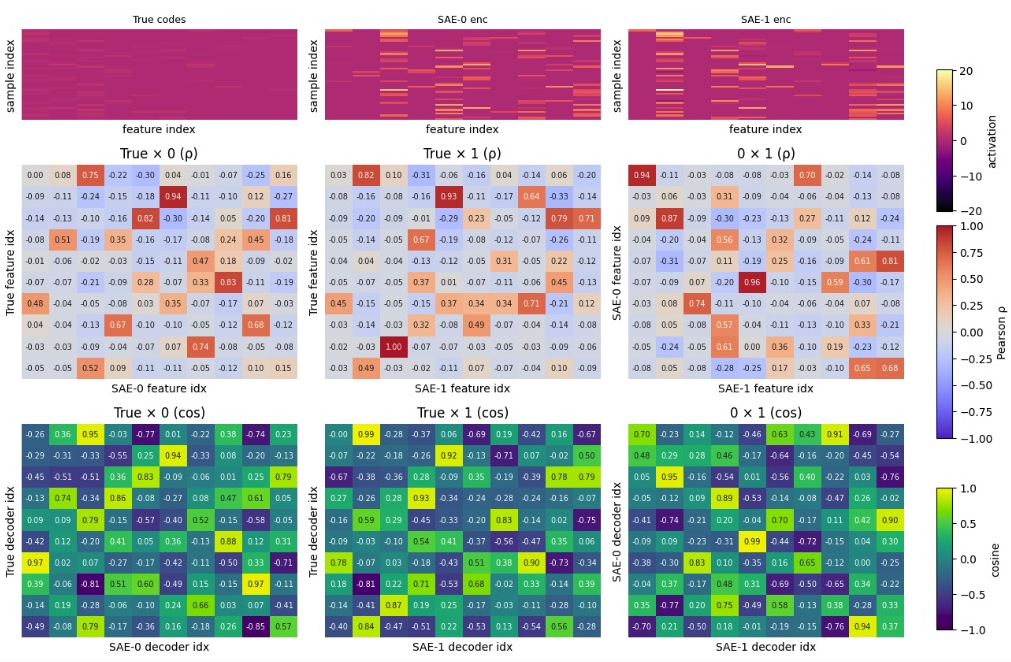}%
    \hfill
    \includegraphics[width=.49\linewidth]{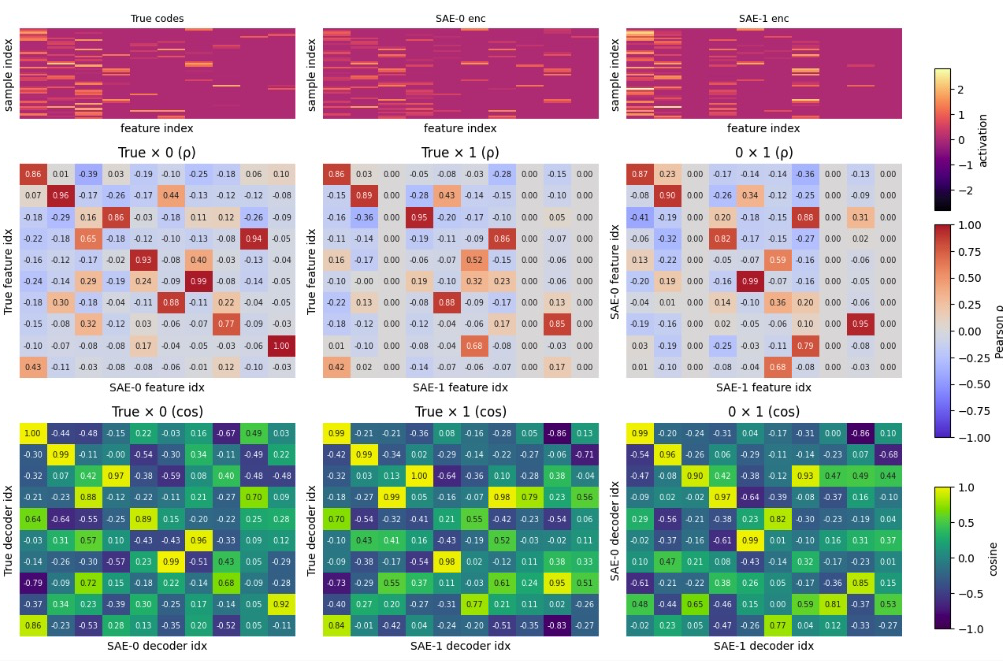}
    \par\vspace{2pt}{\scriptsize Vanilla \hfill O\,-\,SAE}
    \caption{$(K,m)=(10,2)$: example seed pair.}
  \end{subfigure}

  \medskip

  \begin{subfigure}[t]{\textwidth}
    \centering
    \includegraphics[width=.49\linewidth]{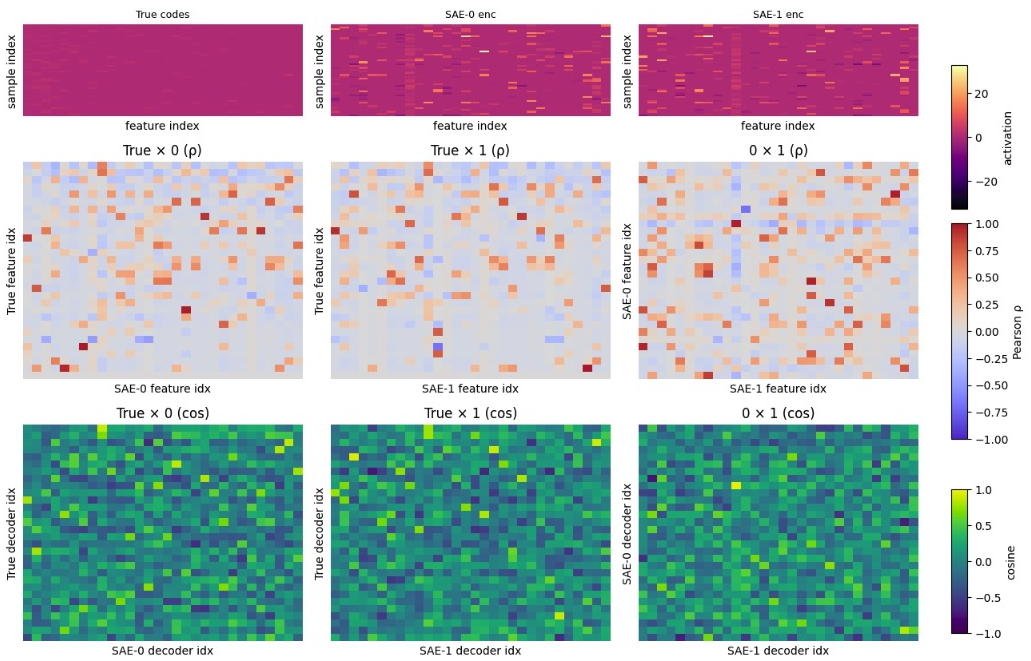}%
    \hfill
    \includegraphics[width=.49\linewidth]{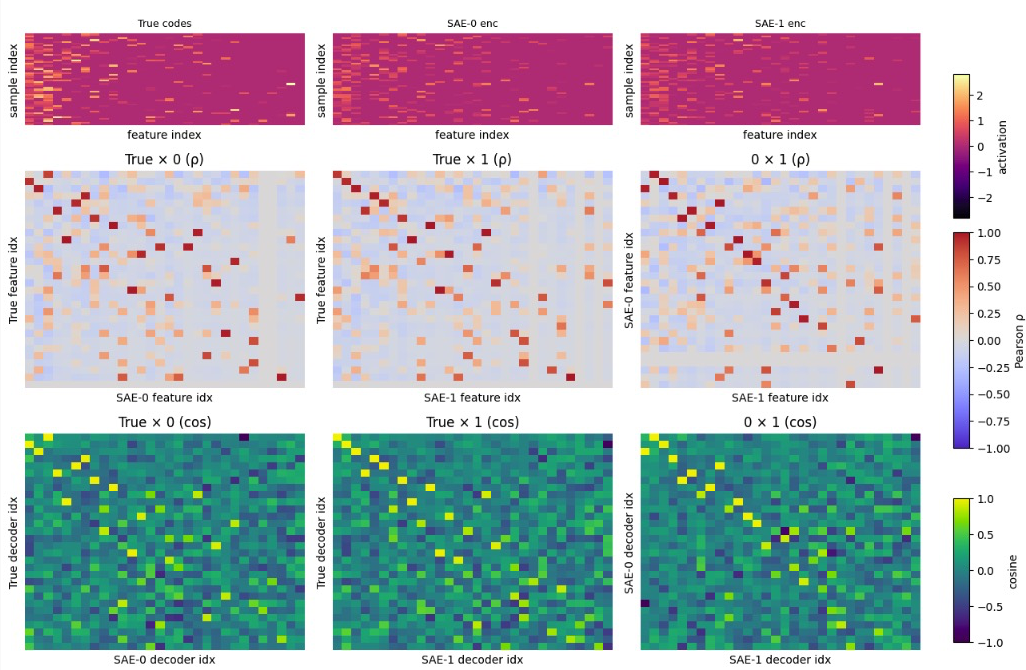}
    \par\vspace{2pt}{\scriptsize Vanilla \hfill O\,-\,SAE}
    \caption{$(K,m)=(30,3)$: example seed pair.}
  \end{subfigure}

  \medskip

  \begin{subfigure}[t]{\textwidth}
    \centering
    \includegraphics[width=.49\linewidth]{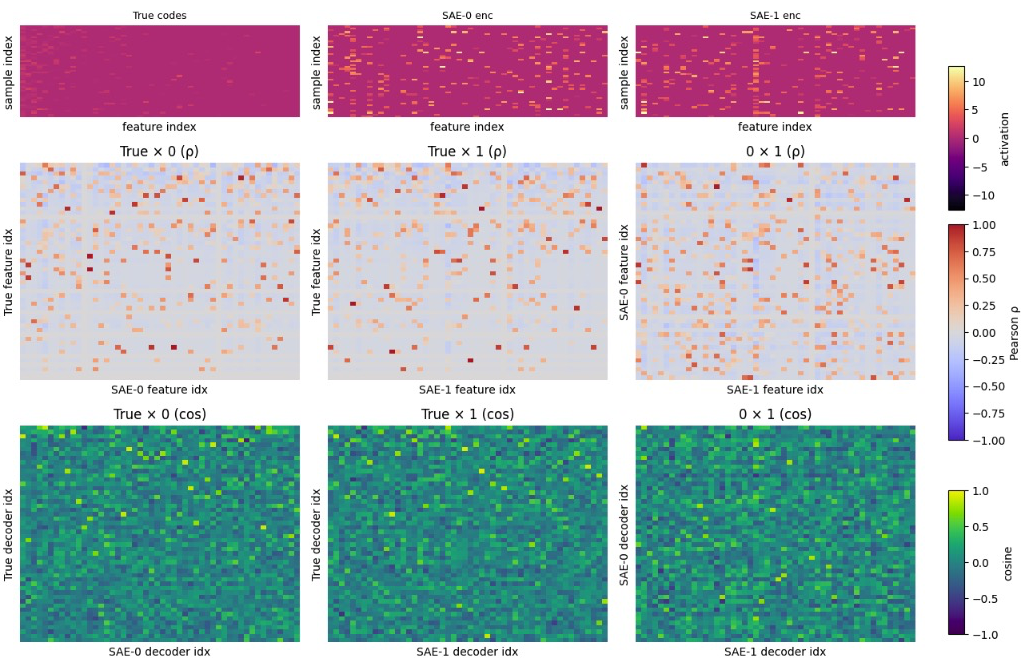}%
    \hfill
    \includegraphics[width=.49\linewidth]{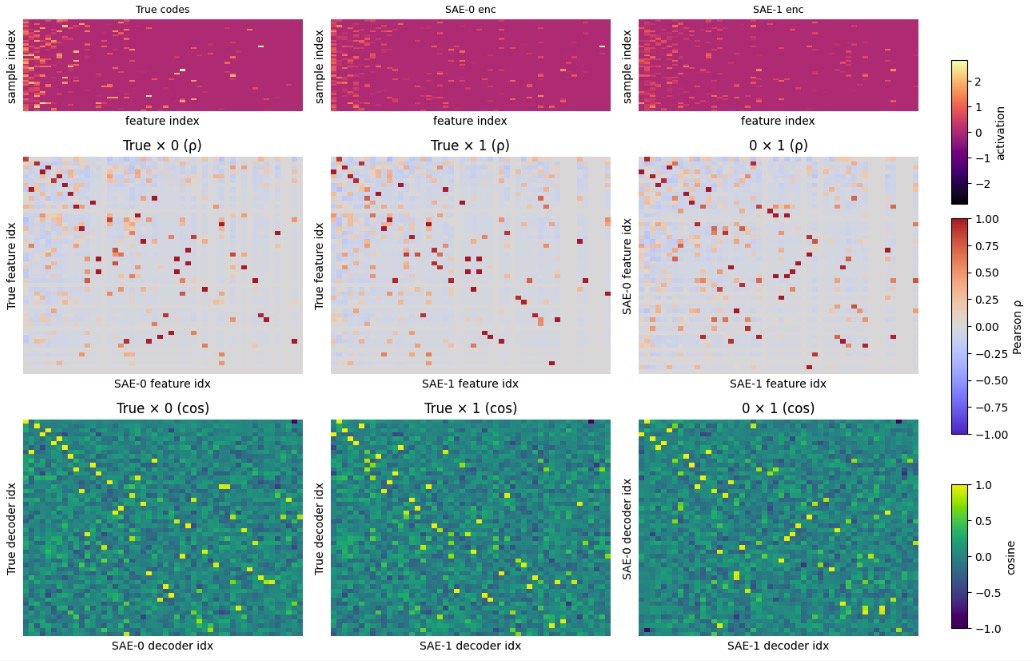}
    \par\vspace{2pt}{\scriptsize Vanilla \hfill O\,-\,SAE}
    \caption{$(K,m)=(50,5)$: example seed pair.}
  \end{subfigure}

  \caption{Top: activation rasters for 50 inputs; middle: all-pairs activation–Pearson; bottom: all-pairs decoder–cosine. Columns within each row are ($0$ vs.\ $Y^*$), ($1$ vs.\ $Y^*$), ($0$ vs.\ $1$).}
  \label{fig:other-sizes}
\end{figure*}

\subsection{Zipfian toy model: high consistency with moderate orderedness}\label{app:john_toy_data}

\paragraph{Setup.}
We evaluate Ordered SAEs (O-SAEs; ``Ordered TopK'' in the legend) on a synthetic Zipfian activation process. Following \citet{song2025mechanistic}, inputs live in $\mathbb{R}^{16}$ with an overcomplete ground-truth dictionary of $32$ atoms; $k=3$ features are active per sample, and we draw $N=50{,}000$ samples. Unless noted otherwise: Gaussian features, Zipf exponent $\alpha$ swept across panels, $30{,}000$ training steps, learning rate $10^{-4}$, $\ell_1$ coefficient $0.01$, and results are averaged over $5$ seeds. We compare to TopK, two Matryoshka variants (``fixed'' and ``random''), and a vanilla SAE. We weighted the features for Matryoshka and Ordered SAEs by the Zipfian alpha value of the data. For fixed Matryoshka, we used 8 groups. For random Matryoshka, we used 4 truncations.

\paragraph{Results.}
Figure~\ref{fig:john_toy_data} shows that O-SAEs achieve \emph{consistency} comparable to the strongest baselines: for small $\alpha$ (near-uniform usage) ground-truth stability is $\approx 0.9$ for O-SAE, TopK, and Matryoshka, and remains competitive as skew increases. Unlike TopK and vanilla, O-SAEs also exhibit \emph{orderedness}: the learned atom ordering correlates with the ground-truth order (Spearman $\rho \approx 0.5$ at low~$\alpha$, remaining positive across the sweep), while pairwise orderedness likewise improves relative to vanilla. This ordering bias comes with a trade-off in global $\ell_2$ reconstruction error, where O-SAEs are higher than TopK/vanilla.

A final observation is that our \emph{Frequency-Invariant Feature Reconstruction (FIFR) Error} (Sec.~\ref{subsec:fifr}) tracks dictionary stability across methods and $\alpha$ much better than the global MSE. In ordered/Zipfian regimes, rare features contribute little to MSE and can be underfit without a visible penalty, whereas FIFR Error exposes such failures. Empirically, when TopK and O-SAEs attain a FIFR Error comparable to vanilla SAEs, their ground-truth stability also converges to vanilla, despite differences in global MSE.

Going forward, we hypothesize that with better hyperparameter tuning and methods such as unit sweeping, it is possible to achieve lower L2 and FIFR error with O-SAEs and thus higher orderedness than baseline architectures while maintaining ~0.9 consistency.

\begin{figure}
    \centering
    \includegraphics[width=\linewidth]{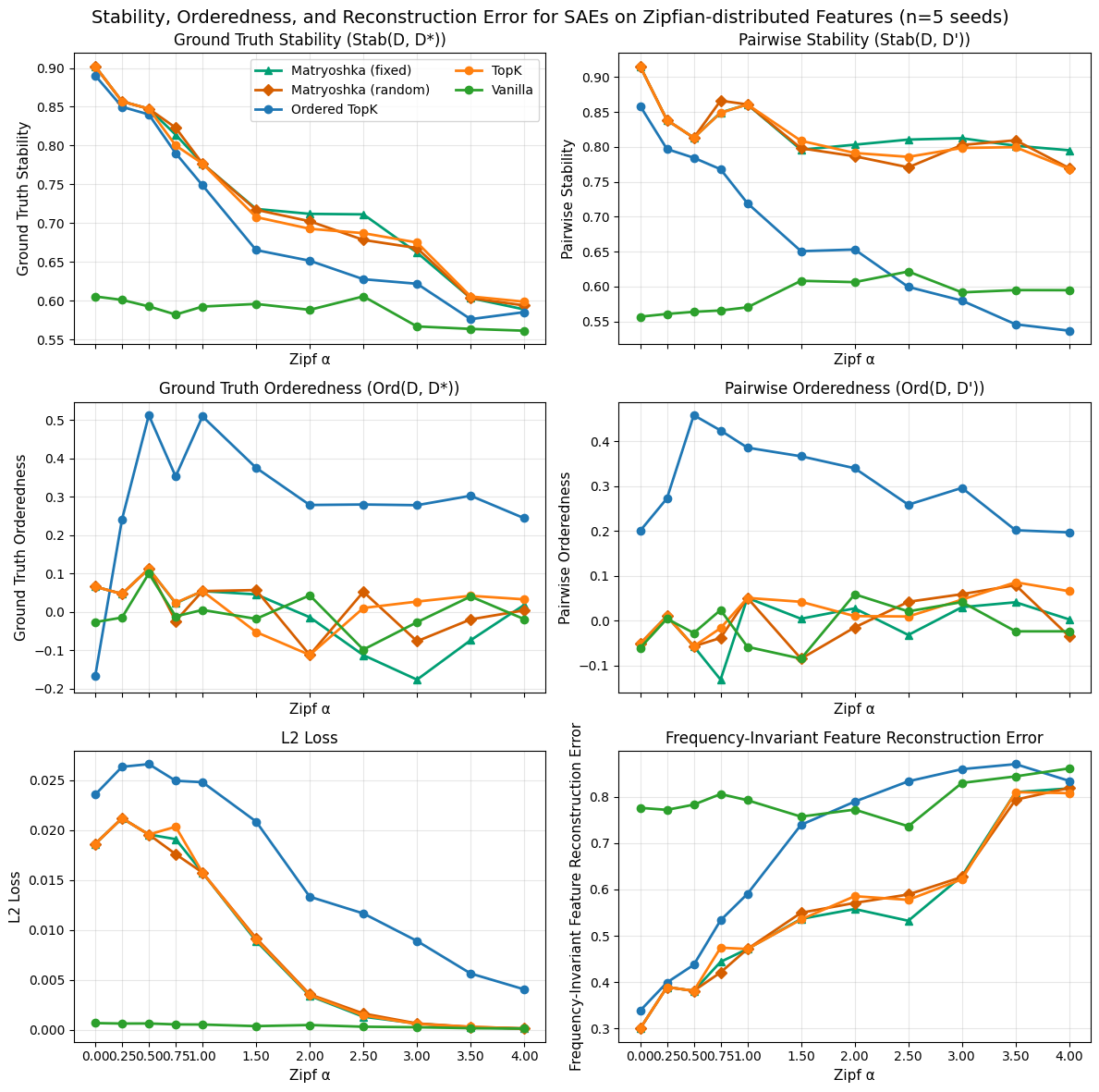}
    \caption{\textbf{Zipfian toy-model comparison of SAEs (5 seeds).}
    Each panel sweeps the Zipf exponent $\alpha$ controlling activation skew (higher $\alpha$~$\Rightarrow$ rarer tail features). 
    \emph{Top row:} O-SAE (Ordered TopK) matches TopK/Matryoshka on ground-truth and pairwise stability ($\sim\!0.9$ at low~$\alpha$, remaining competitive as skew rises). 
    \emph{Middle row:} O-SAE exhibits positive orderedness (Spearman $\rho$ with ground-truth ordering $\approx 0.5$ at low~$\alpha$; pairwise orderedness likewise improves), unlike vanilla/TopK. 
    \emph{Bottom row:} O-SAE has higher global $\ell_2$ error, but the proposed \emph{Frequency-Invariant Feature Reconstruction (FIFR) Error} better predicts stability across methods and $\alpha$: when FIFR Error aligns across methods, ground-truth stability aligns as well.
    Experimental details: input dim $16$, dictionary size $32$, $k\!=\!3$, $N\!=\!50\mathrm{k}$, Gaussian features, $30\mathrm{k}$ steps, $\text{lr}\!=\!10^{-4}$, $\ell_1\!=\!0.01$.}
    \label{fig:john_toy_data}
\end{figure}

\subsection{MSE underweights rare features; a frequency-invariant error}
\label{subsec:fifr}
\paragraph{Motivation.}
In ordered/Zipfian settings, some features appear far more often than others. The global reconstruction MSE $\mathbb{E}\|x-\hat x\|_2^2$ therefore emphasizes frequent features and can look ``good'' while rare features are poorly reconstructed. We seek a metric that (i) treats each feature equally regardless of frequency and (ii) scores the fidelity of its \emph{per-feature} contribution to the reconstruction.

\paragraph{Definition (Frequency-Invariant Feature Reconstruction \emph{Error}).}
Let the ground-truth dictionary be $A^\star\!\in\!\mathbb{R}^{m\times n}$ with atoms (rows) $a^\star_j$, true codes $S^\star\!\in\!\mathbb{R}^{N\times m}$, learned decoder $A\!\in\!\mathbb{R}^{m\times n}$ with atoms $a_k$, and inferred features $F\!\in\!\mathbb{R}^{N\times m}$. 
We align atoms by Hungarian assignment on absolute correlations of $\ell_2$-normalized atoms:
\[
\tilde a^\star_j=\frac{a^\star_j}{\|a^\star_j\|_2},\quad 
\tilde a_k=\frac{a_k}{\|a_k\|_2},\quad 
C_{jk}=\langle \tilde a^\star_j,\tilde a_k\rangle,\quad 
\pi\in\arg\max_{\sigma}\sum_{j=1}^m |C_{j,\sigma(j)}|.
\]
For feature $j$, let $I_j=\{i:\,s^\star_{ij}\neq 0\}$. Define true and estimated per-sample components
\[
c^\star_{ij}=s^\star_{ij}\,a^\star_j,\qquad 
\hat c_{ij}=f_{i,\pi(j)}\,a_{\pi(j)}.
\]
With $\varepsilon=10^{-12}$,
\[
r_j=\frac{\frac{1}{|I_j|}\sum_{i\in I_j}\|c^\star_{ij}-\hat c_{ij}\|_2^2}{\frac{1}{|I_j|}\sum_{i\in I_j}\|c^\star_{ij}\|_2^2+\varepsilon},\qquad 
\mathrm{FIFR}(A^\star,S^\star;A,F)=\frac{1}{|J|}\sum_{j\in J} r_j,\; J=\{j:|I_j|>0\}.
\]

\paragraph{Properties.}
(i) \emph{Frequency-invariant:} macro-averaging across features prevents frequent atoms from dominating. 
(ii) \emph{Per-feature scale-invariant:} normalizing by the energy of $c^\star_{ij}$ removes dictionary–code scaling ambiguity. 
(iii) \emph{Permutation-invariant:} alignment via $\pi$ factors out atom ordering. 
(iv) \emph{Interpretable:} FIFR Error equals $0$ iff per-feature components are recovered exactly; larger values indicate worse reconstruction (and can exceed $1$).
As seen in Fig.~\ref{fig:john_toy_data}, FIFR Error correlates strongly with dictionary stability in Zipfian regimes, whereas global MSE does not.

\newpage
\section{Empirical results}\label{app:empirical_results}

Figure \ref{fig:time_checkpoints_orderedness} and \ref{fig:time_checkpoints_stability} show how orderedness and stability metrics change as SAEs are trained for O-SAE, Random MSAE, and Fixed MSAE on the Pile and Dolma. Orderedness and stability tend to increase over the 45M tokens illustrated in the figures, with some exceptions at lower prefix lengths going down while higher prefix length measures increase.
\begin{figure}[htbp]
  \centering
    \includegraphics[width=\textwidth]{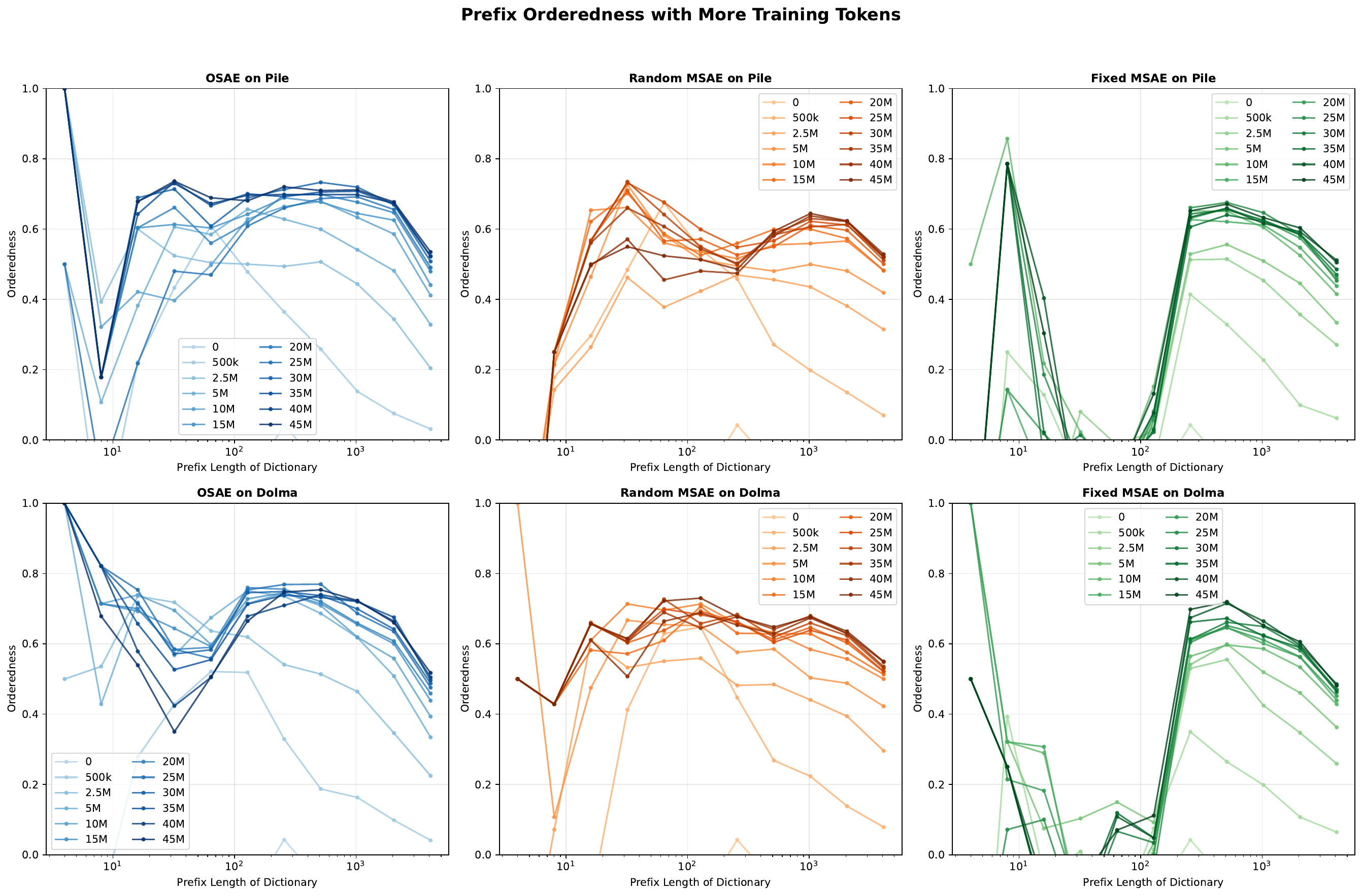}
    \caption{Prefix \textbf{Ord($D$,$D'$)} plotted with increasing training tokens. Top row is trained on the Pile, and the bottom row is trained on Dolma. (n=1 pair of seeds)}
  \label{fig:time_checkpoints_orderedness}
\end{figure}
\begin{figure}[htbp]
    \centering
    \includegraphics[width=\textwidth]{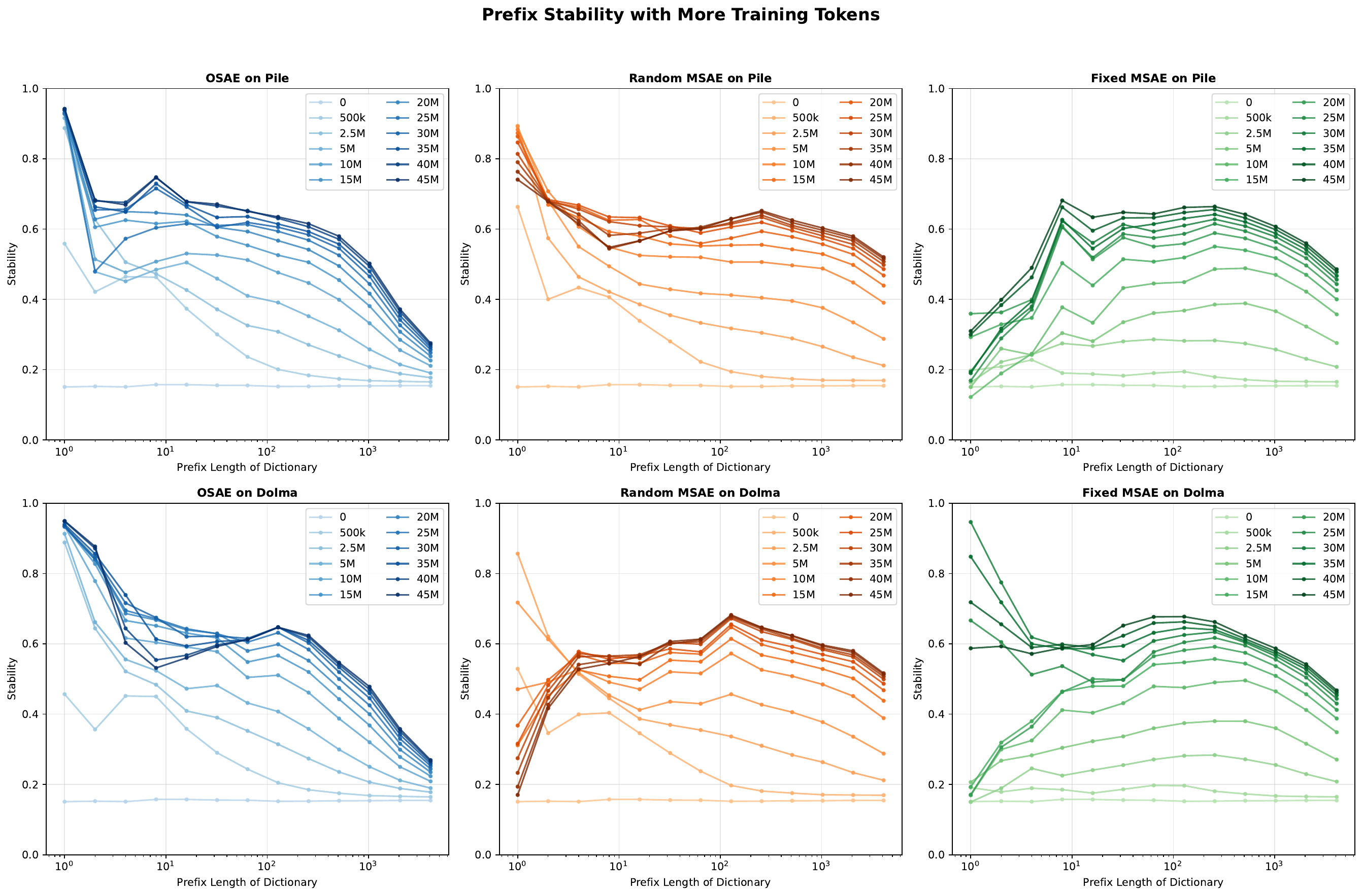}

    \caption{Prefix \textbf{Stab($D$,$D'$)} plotted with increasing training tokens. Top row is trained on the Pile, and the bottom row is trained on Dolma. (n=1 pair of seeds)}
  \label{fig:time_checkpoints_stability}
\end{figure}

\begin{figure}[htbp]
  \centering
  \begin{subfigure}[b]{\textwidth}
    \centering
    \includegraphics[width=\textwidth]{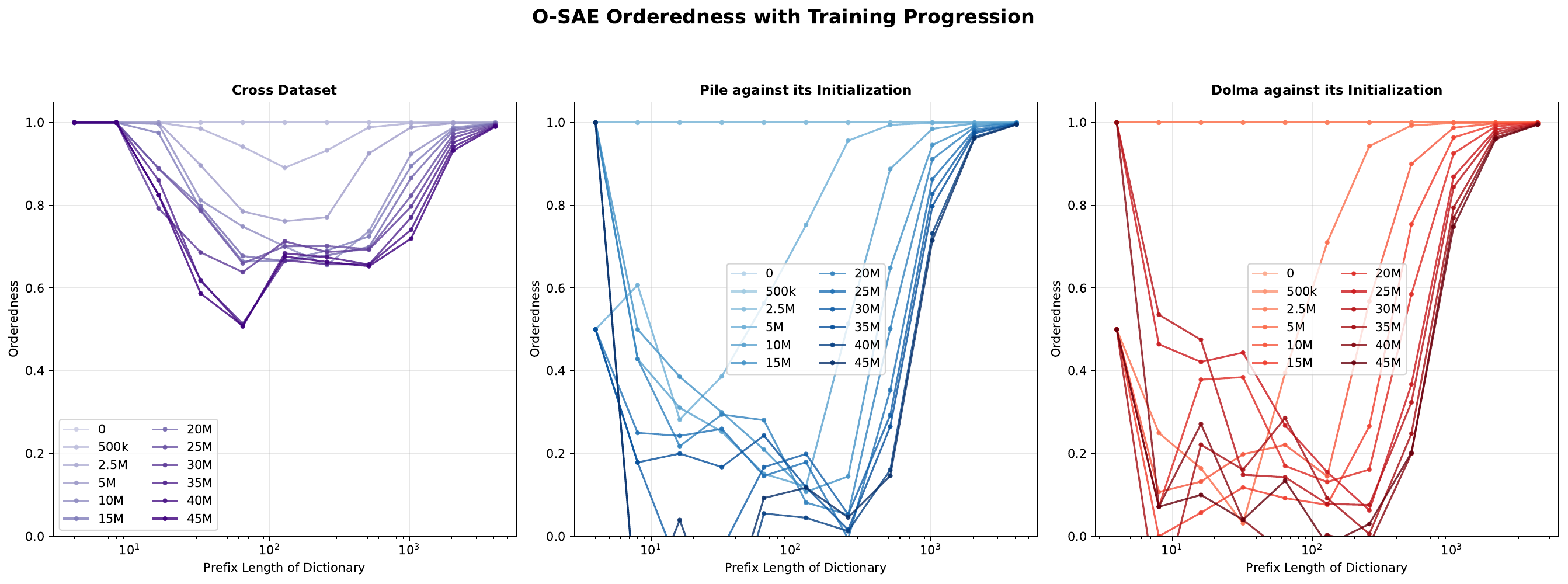}
  \end{subfigure}%
  \\
  \begin{subfigure}[b]{\textwidth}
    \centering
    \includegraphics[width=\textwidth]{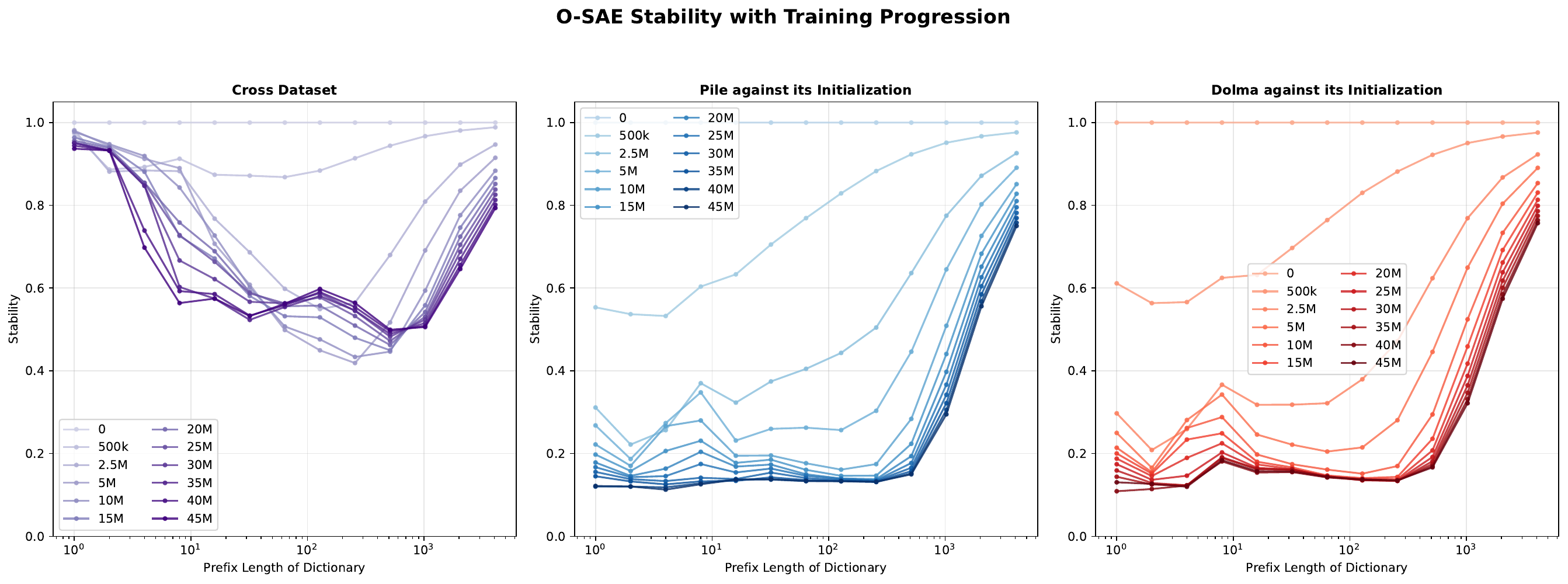}
  \end{subfigure}%

  \caption{O-SAE Orderedness and Stability. (left) Cross dataset compares O-SAE trained on Pile against O-SAE trained Dolma. They use the same seed, so initial checkpoints start with 1.0 orderedness and stability. (middle) Shows the progression of checkpoints from O-SAE trained on the Pile, when compared against its initialized checkpoint. This gives a relative measure of deviation from initialization. (right) Progression of checkpoints trained on Dolma compared against its initialized checkpoint. }
  \label{fig:osae_sameseed}
\end{figure}

\begin{figure}[htbp]
  \centering
  \begin{subfigure}[b]{\textwidth}
    \centering
    \includegraphics[width=\textwidth]{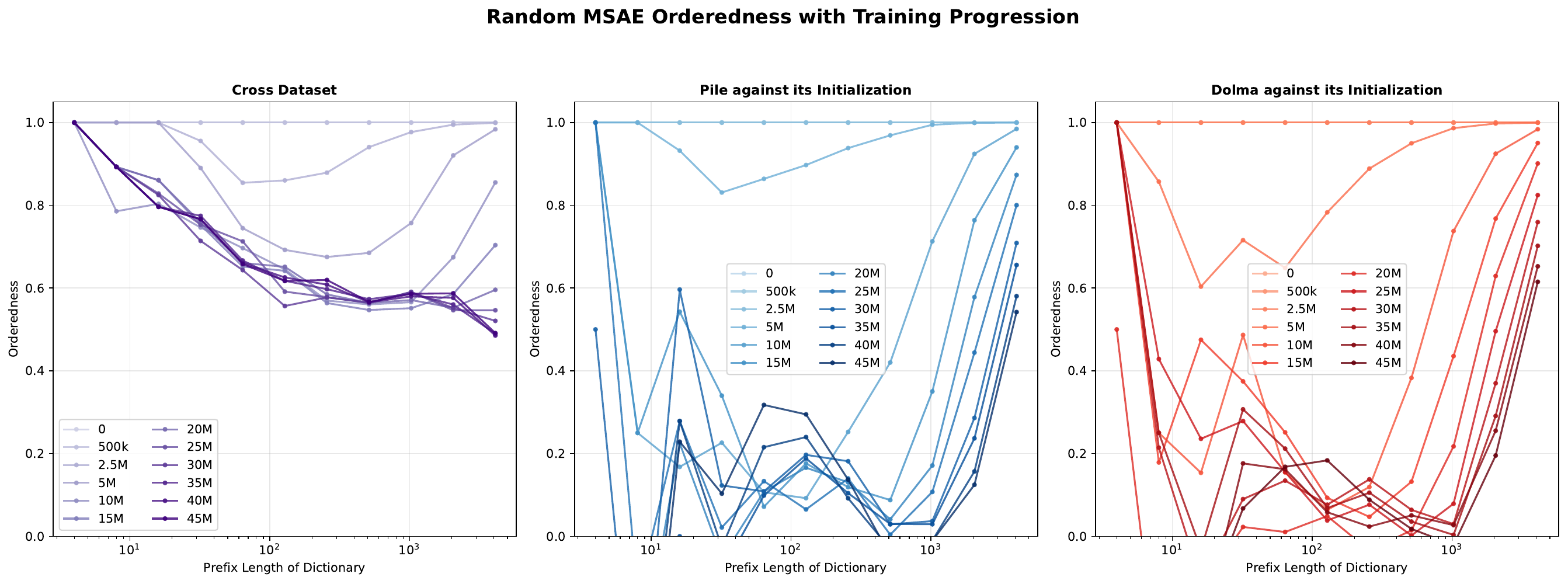}
  \end{subfigure}%
  \\
  \begin{subfigure}[b]{\textwidth}
    \centering
    \includegraphics[width=\textwidth]{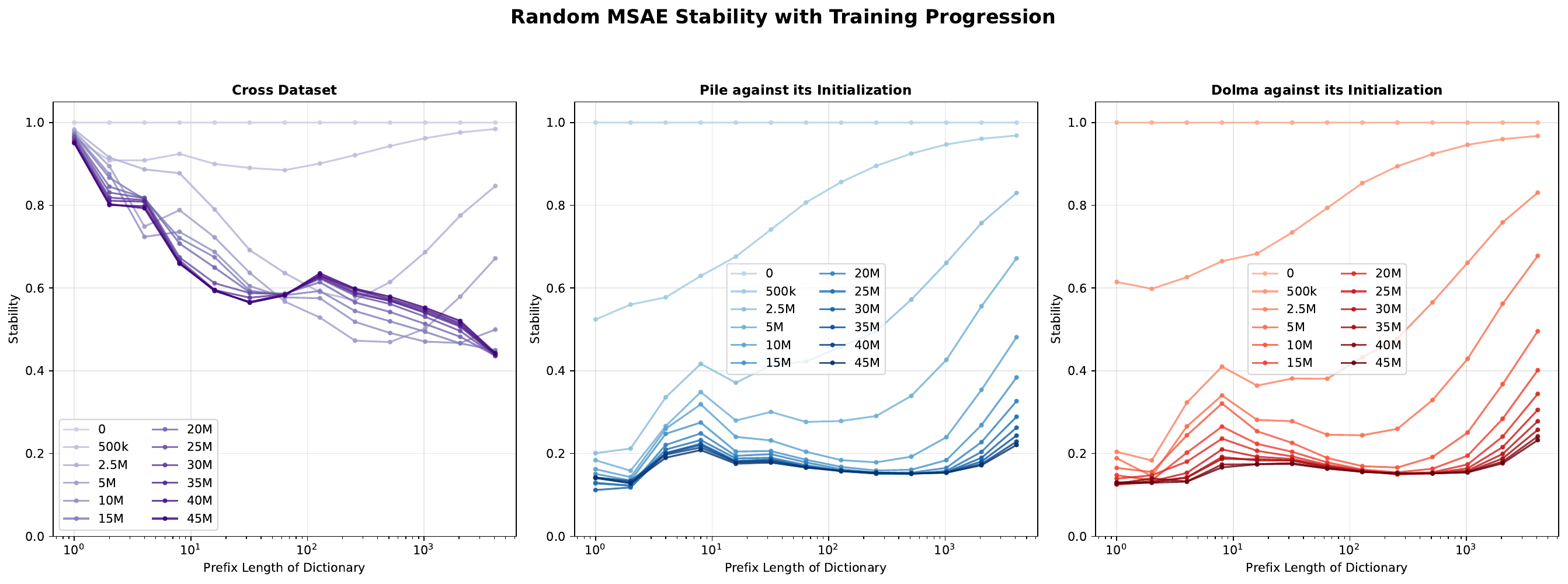}
  \end{subfigure}%

  \caption{Random MSAE Orderedness and Stability. (left) Cross dataset compares Random MSAE trained on Pile against Random MSAE trained Dolma. They use the same seed, so initial checkpoints start with 1.0 orderedness and stability. (middle) Shows the progression of checkpoints from Random MSAE trained on the Pile, when compared against its initialized checkpoint. This gives a relative measure of deviation from initialization. (right) Progression of checkpoints trained on Dolma compared against its initialized checkpoint. }
  \label{fig:randmat_sameseed}
\end{figure}

\newpage
\section{Empirical results - SAE Stitching}\label{app:sae_stitching}

\textbf{O-SAEs decrease the number of novel features found by SAE Stitching} 

A key limitation of standard SAEs is their incompleteness, since they often fail to recover the full set of canonical features in a model’s representations. Prior work \cite{leask2025sparseautoencoderscanonicalunits} highlights this issue using SAE stitching. In this procedure, we take a feature (latent) discovered by a larger SAE and “stitch” it into a smaller SAE. If this stitched latent improves reconstruction performance, it suggests that the smaller SAE was missing this information entirely, which means the larger SAE has uncovered a novel feature. If reconstruction worsens instead, the stitched latent is overlapping with existing ones, which means the SAE is redundantly encoding the same information. We call this a reconstruction feature.

Our experiments show that O-SAEs substantially reduce the fraction of novel features discovered via stitching. In other words, O-SAEs capture more of the underlying structure up front, leaving fewer important features uncovered compared to standard SAEs. This reduction in incompleteness directly addresses one of the main critiques of sparse autoencoders: while traditional SAEs leave gaps in the feature set, O-SAEs close those gaps by providing a more complete and less redundant decomposition.
\begin{table}[h!]
\centering
\begin{tabular}{lccc}
\textbf{SAE Type} & \textbf{Novel Feature \%} & \textbf{Reconstruction \%} & \textbf{No MSE Change \%} \\
\hline
BatchTopK   & 73.8\% & 21.2\% & 5.0\% \\
Random MSAE      & 52.7\% & 11.4\% & 35.9\% \\
O-SAE       & \textbf{33.8\%} & \textbf{64.8\%} & 1.4\% \\

\end{tabular}
\caption{Novel Feature, Reconstruction, and No MSE Change Percentages of various SAE types when stitching 65536-sized features into the corresponding 4096-sized SAE.}
\label{tab:sae_stitching}
\end{table}

In Table \ref{tab:sae_stitching}, the BatchTopK baseline demonstrates 73.8\% novel features, indicating strong incompleteness. While O-SAE’s 33.8\% novel features are still substantial but better than the baseline. Random MSAE falls in between at 52.7\% novel features with the caveat that it has a higher fraction of non-activating features for the limited number of samples tested on. This shows how increasing degrees of hierarchy decrease the novel percentage between 65536 and 4096 sized SAEs.

\end{document}